\documentclass[conference]{IEEEtran}
\IEEEoverridecommandlockouts
\usepackage{amsmath,amsfonts}
\usepackage{array}
\usepackage{textcomp}
\usepackage{stfloats}
\usepackage{url}
\usepackage{amsmath, amssymb, amsthm}
\usepackage{verbatim}
\usepackage{graphicx}
\usepackage{amssymb}
\usepackage{hyperref}
\usepackage{graphicx}
\usepackage{amsmath}
\usepackage{longtable}
\usepackage{algorithm} 
\usepackage{algpseudocode} 
\usepackage{mathrsfs}
\usepackage{subcaption}
\usepackage{mathtools}
\usepackage{pifont}
\usepackage{color}
\usepackage{lineno}
\usepackage{graphicx}  
\usepackage{makecell} 
\usepackage{pdflscape}
\usepackage{adjustbox}
\usepackage[utf8]{inputenc}
\usepackage{tabularx}
\usepackage{blindtext}
\usepackage{longtable}
\usepackage{lscape}
\usepackage{amsthm}
\usepackage{graphicx}
\usepackage{booktabs}
\usepackage{amsmath, amssymb}
\usepackage{amsthm}
\usepackage[numbers]{natbib}
\usepackage{tikz}
\usepackage{siunitx}
\usepackage{mathrsfs}
\usepackage{multirow}
\usetikzlibrary{shapes,arrows}
\usepackage{xcolor}
\usepackage{setspace}
\usepackage{notoccite} 
\usepackage{lscape} 
\usepackage{mwe}
\usepackage{booktabs}
\usepackage{amsthm}
\newtheorem{theorem}{Theorem}[section]

\theoremstyle{definition}

\theoremstyle{definition}

\newcommand{\RNum}[1]{\lowercase\expandafter{\romannumeral #1\relax}}
\newcommand{\RNumU}[1]{\uppercase\expandafter{\romannumeral #1\relax}}
\usepackage[numbers]{natbib}

\def\BibTeX{{\rm B\kern-.05em{\sc i\kern-.025em b}\kern-.08em
    T\kern-.1667em\lower.7ex\hbox{E}\kern-.125emX}}
\begin{document}
\title{ECA-BLS: An Efficient Complex-Augmented Broad Learning System}

\author{\IEEEauthorblockN{A. Rahaman}
\IEEEauthorblockA{\textit{Department of Mathematics} \\
\textit{Indian Institute of Technology Indore}\\
Indore, India \\
phd2401141001@iiti.ac.in}
\and
\IEEEauthorblockN{A. Quadir}
\IEEEauthorblockA{\textit{Department of Mathematics} \\
\textit{Indian Institute of Technology Indore}\\
Indore, India \\
mscphd2207141002@iiti.ac.in}
\and
\IEEEauthorblockN{M. Sajid}
\IEEEauthorblockA{\textit{Department of Mathematics} \\
\textit{Indian Institute of Technology Indore}\\
Indore, India \\
phd2101241003@iiti.ac.in}
\and
\IEEEauthorblockN{M. Akhtar}
\IEEEauthorblockA{\textit{Department of Mathematics} \\
\textit{Indian Institute of Technology Indore}\\
Indore, India \\
phd2101241004@iiti.ac.in}
\and
\IEEEauthorblockN{M. Tanveer}
\IEEEauthorblockA{\textit{Department of Mathematics} \\
\textit{Indian Institute of Technology Indore}\\
Indore, India \\
mtanveer@iiti.ac.in}
}
\maketitle
\begin{abstract}
Broad Learning System (BLS) is an efficient alternative to deep architectures due to its fast training, analytical learning, and strong generalization under limited data. However, existing BLS variants are confined to real-valued representations, restricting their ability to capture nonlinear interactions and second-order statistical dependencies inherent in real-world data. Notably, no prior BLS model fully exploits the complete second-order statistics that naturally emerge when data are embedded in the complex domain. To address this limitation, this paper introduces the first complex augmented  Broad Learning System (CA-BLS), which transforms real-valued inputs into phase-encoded complex representations and adopts widely linear modeling to jointly leverage covariance and pseudo-covariance information via complex conjugate augmentation. This enables effective modeling of latent nonlinearities, coherence structures, and second-order dependencies inaccessible to conventional BLS formulations. To mitigate the additional computational cost of complex augmentation, an Efficient Complex Augmented  BLS (ECA-BLS) is further developed, reformulating CA-BLS entirely in the real domain while preserving its exact decision function, achieving up to 75\% fewer multiplications and over 60\% fewer additions. A rigorous theoretical analysis proves the mathematical equivalence between CA-BLS and ECA-BLS, ensuring zero theoretical loss. Extensive experiments on 26 benchmark datasets from the UCI and KEEL repositories demonstrate that ECA-BLS consistently outperforms classical BLS and recent state-of-the-art randomized neural networks in accuracy, average rank, and statistical significance, establishing augmented second-order modeling as a critical and previously missing dimension of BLS research.
\end{abstract}
\begin{IEEEkeywords}
 Randomized neural networks, Broad learning system, Complex Augmented, Second-order statistics.
\end{IEEEkeywords}
\section{Introduction and Motivation}

\IEEEPARstart{I}n today's world, deep learning has become an inevitable research focus in the field of artificial intelligence to handle complex tasks such as image classification \cite{sepehri2024hierarchical, quadir2026hypergraph} and natural language processing \cite{10413606}, and so on. Although deep learning has powerful learning capabilities, its complicated architecture and a vast number of hyperparameters make the training process extremely time-consuming and require powerful hardware to run the deep model. Therefore, consering these above issues, Chen and Liu proposed a broad learning system (BLS) \cite{chen2017broad}. BLS belongs to the family of randomized neural networks (RdNNs) \cite{zhang2016survey, quadir2026garfln} as RdNNs incorporate randomness in the topology and learning process of the model, resulting in RdNNs learning with fewer tunable parameters in less time and hardware resources. Thus, BLS has the advantages of a simple architecture and can learn with first speed.  BLS has three primary layers: (i) a feature learning layer, (ii) an enhancement layer, and (iii) an output layer. In feature learning and enhancement layers. In the feature learning and enhancement segment, the input data are transformed into a high-dimensional feature space using a set of random projection mappings. As a result of the feature learning and enhancement segment, a collection of high-dimensional features is produced that effectively extracts important information from the input data.
To train the BLS model, only the output layer parameters
need to be computed by the least-squares method, resulting in no need for backpropagation during training. Further, the BLS has more advantages, like its ability to learn from a small number of training samples without overfitting, the universal approximation capability of BLS \cite{chen2017broad, tanveer2026eda} makes it more promising among researchers, and the adaptable topology of BLS makes it possible to train and update the model effectively in an incremental way \cite{chen2017broad}, and when training data are scarce, BLS may have superior generalization performance than deep learning models \cite{yu2019broad}.

In recent years, a wide range of BLS variants have been proposed to enhance learning efficiency, robustness, and generalization across diverse application domains \cite{TANVEER2026112940}. These efforts have significantly enriched the BLS framework while also revealing its inherent limitations \cite{TANVEER2026108914}. For instance, an iterative gradient-descent-based learning strategy was introduced in \cite{8616379} to update the connections among feature nodes, enhancement nodes, and output nodes, thereby improving adaptability beyond the original closed-form solution. To incorporate human-like reasoning into BLS, Shuang et al. \cite{feng2018fuzzy} proposed a fuzzy BLS that integrates IF–THEN fuzzy rules with the BLS learning architecture. Subsequently, fuzzy BLS (F-BLS) and intuitionistic fuzzy BLS (IF-BLS) models were developed in \cite{sajid2024intuitionistic, tanveer2026fuzzy} to enhance robustness against noise and outliers by explicitly modeling uncertainty and hesitation in training samples.

Further, the graph-embedding intuitionistic fuzzy adaptive BLS (GEIB) \cite{10906533} was introduced to address imbalanced and noisy data by preserving the intrinsic geometric structure of the data manifold. This was achieved by embedding graph-based relations into the BLS framework and assigning adaptive importance scores to training samples using affiliation and non-affiliation degrees from intuitionistic fuzzy theory. More recently, the kernel risk-sensitive mean p-power BLS (KRPBLS) \cite{10902561} was proposed to mitigate the sensitivity of conventional BLS models to label noise by replacing the minimum mean square error criterion with a more robust risk-sensitive optimization objective \cite{quadir2025twin, quadir2025trkm}. Collectively, these studies have substantially improved the flexibility and resilience of BLS models, establishing them as competitive shallow learning architectures in modern machine learning.

Despite these advances, a fundamental limitation remains largely unaddressed. The performance of BLS is highly dependent on the feature learning and enhancement layers, which play a pivotal role in propagating information to the output layer. In standard BLS architectures, hidden-layer representations are generated through random projections. While computationally efficient, this random generation mechanism often fails to adequately capture complex nonlinear structures and higher-order dependencies inherent in real-world data, thereby restricting the representational capacity of the model.

In parallel, complex-valued neural networks (CVNNs) have attracted increasing attention due to their superior ability to model nonlinear relationships and signal coherence \cite{9849162}. Unlike real-valued neural networks (RVNNs), CVNNs offer a natural and mathematically elegant framework for representing physical systems and signals encountered in many engineering applications. Notably, Hirose demonstrated that CVNNs often exhibit smaller generalization errors than their real-valued counterparts, owing to their enhanced capability to capture phase and amplitude interactions \cite{6138313}. Zhang further showed that complex-valued gradient learning with complex step sizes provides theoretical and practical advantages over traditional real-valued learning \cite{7321072}. 

More recently, a complex variant of a shallow RdNN was proposed in \cite{sajid2025rvfl}, where real-valued tabular data are transformed into complex representations and processed using complex weights and activation functions.

Importantly, recent advances in complex statistics indicate that the second-order characteristics of complex-valued data cannot be fully described by the conventional covariance matrix alone. Instead, both the covariance and pseudo-covariance matrices are required to capture the complete second-order statistics \cite{QING2023108792}. To address this issue, complex augmented valued neural networks based on widely linear modeling have been proposed \cite{XU201544}, where the complex signal and its conjugate are jointly processed to exploit all available statistical information.

However, despite the strong theoretical foundations and empirical success of augmented CVNNs, their application has been largely confined to domains where data are inherently complex-valued, such as signal processing and communications. To the best of our knowledge, no complex augmented -valued variant of the Broad Learning System has been developed so far. Consequently, existing BLS models are unable to exploit signal coherence, full second-order statistics, and conjugate information, leaving a significant representational gap, particularly when dealing with nonlinear, noisy, or structurally complex tabular data.

Motivated by this clear gap, this paper proposes a novel Complex Augmented  Broad Learning System (CA-BLS) that integrates complex augmented  representations into both the feature learning and enhancement layers. By incorporating the complex conjugates of these layers, the proposed model is able to capture the complete second-order statistics of the data, including both covariance and pseudo-covariance components, thereby uncovering nonlinear structures that remain inaccessible to real-valued BLS models. While this augmentation significantly enhances representational power, it also introduces additional computational overhead due to the increased dimensionality of complex-valued matrices involved in pseudo-inverse computation.

To address this challenge, we further propose an Efficient Complex Augmented BLS (ECA-BLS), which substantially reduces computational complexity by replacing most complex-valued matrix operations with equivalent real-valued computations, without compromising model performance. Moreover, we rigorously prove the mathematical equivalence between CA-BLS and ECA-BLS, ensuring that the efficiency gains are achieved without any loss of modeling capability.

The primary contributions of this paper can be summarized as follows:

\begin{itemize}
\item A novel mechanism is introduced to transform real-valued tabular datasets into complex-valued representations.
\item A Complex Augmented  Broad Learning System (CA-BLS) is developed, employing complex weights and analytic activation functions to capture full second-order statistics.
\item An Efficient Complex Augmented  BLS (ECA-BLS) is proposed to significantly reduce the computational complexity of CA-BLS.
\item A theoretical proof is provided to establish the equivalence between CA-BLS and ECA-BLS.
\item Comprehensive experimental evaluations demonstrate the superiority of ECA-BLS over recent state-of-the-art models across diverse benchmark datasets.
\end{itemize}

Related works are given in Section S.I. of the Supplementary Material.
\section{Notation}
Let the training dataset be denoted as $\mathcal{Z}_{\text{raw}} = \{(x_j, y_j) \mid x_j \in \mathbb{R}^{B},\, y_j \in \mathbb{R}^{C},\, j = 1, 2, \ldots, D\},$ where $B$ represents the number of features, $C$ denotes the number of output classes, and $D$ is the total number of training samples. The input samples are arranged into a data matrix $\mathbf{Z} \in \mathbb{R}^{D \times B}$, while the corresponding target labels are collected into a matrix $\mathbf{X} \in \mathbb{R}^{D \times C}$.
\section{Proposed method}
\subsection{Transformation of Data from Real Field to Complex Field}
To enable BLS to operate in the complex domain, each real-valued input feature is independently transformed into a complex-valued representation using phase encoding. Given a $(i, j)^{th}$ normalized real-valued input feature of real valued feature matrix $\mathbf{Z}$, $z_{ij} \in [0,1]$, its complex-valued representation is defined as
\begin{equation}
{}^{cx}z_{ij} = e^{(i\pi z_{ij})},
\label{eq:phase_encoding}
\end{equation}
where $i=\sqrt{-1}$, $i = 1, 2, \ldots, D,$ and $ j = 1, 2, \ldots, B.$
This transformation maps real-valued features onto the unit circle in the complex plane, ensuring bounded magnitude and numerical stability. Moreover, phase encoding introduces nonlinear angular separation between samples, thereby enhancing class discriminability without increasing the dimensionality of the input space. After transformation, the real-valued dataset $\mathbf{Z} \in \mathbb{R}^{D \times B}$ is converted into a complex-valued dataset ${}^{cx}\mathbf{Z} \in \mathbb{C}^{D \times B}$, which serves as the input to the proposed complex-valued BLS framework.

\subsection{Proposed Complex Augmented  Broad Learning System (CA-BLS)}
Although complex-valued data representations can effectively exploit phase information and enhance nonlinear modeling capability, they are still insufficient to fully characterize the second-order statistics of complex-valued signals when only covariance information is considered \cite{QING2023108792}. According to the theory of complex-valued random processes, the covariance matrix alone cannot completely describe non-circular complex signals. In such cases, the pseudo-covariance matrix carries additional and essential statistical information.

Phase-encoded complex representations derived from real-valued data are generally non-circular. Consequently, ignoring pseudo-covariance information may lead to suboptimal learning performance. To overcome this limitation, we adopt the principle of complex augmented  modeling and propose an \emph{Complex Augmented  Broad Learning System (CA-BLS)}, which explicitly incorporates conjugate representations to fully exploit complete second-order statistics, thereby improving generalization capability.

Let ${}^{cx}\mathbf{Z} \in \mathbb{C}^{D \times B}$ denote the phase-encoded complex-valued input matrix, and let $\mathbf{X} \in \mathbb{R}^{D \times C}$ represent the corresponding target label matrix. Following the standard BLS architecture, the complex-valued input is mapped to the feature and enhancement layers using randomly generated complex weights, as described below.

\subsubsection{Feature Learning Segment}
Assume that the feature learning layer consists of $a$ feature groups, each containing $b$ nodes. The output of the $i$th feature group is defined as
\begin{equation}
{}^{cx}\mathbf{A}_i =
{}^{cx}\sigma_i\!\left(
{}^{cx}\mathbf{Z}\,{}^{cx}\mathbf{W}_{A_i}
+
{}^{cx}\boldsymbol{\xi}_{A_i}
\right)
\in \mathbb{C}^{D \times b},
\quad i = 1, 2, \ldots, a,
\label{a1}
\end{equation}
where ${}^{cx}\sigma_i(\cdot)$ denotes the complex-valued activation function, ${}^{cx}\mathbf{W}_{A_i} \in \mathbb{C}^{B \times b}$ is a randomly generated weight matrix, and ${}^{cx}\boldsymbol{\xi}_{A_i} \in \mathbb{C}^{D \times b}$ is the corresponding bias matrix. The outputs of all feature groups are concatenated to form
\begin{equation}
{}^{cx}\mathbf{A}^{a}
=
[{}^{cx}\mathbf{A}_1, {}^{cx}\mathbf{A}_2, \ldots, {}^{cx}\mathbf{A}_a]
\in \mathbb{C}^{D \times ab}.
\label{eq:acbls_feature_concat}
\end{equation}

\subsubsection{Enhancement Segment}
The augmented feature matrix ${}^{cx}\mathbf{A}^{a}$ is further projected into the enhancement layer through random complex transformations followed by nonlinear activation. Let $c$ denote the number of enhancement groups, each consisting of $d$ nodes. The output of the $k$th enhancement group is given by
\begin{equation}
{}^{cx}\mathbf{Y}_k =
{}^{cx}\zeta_k\!\left(
{}^{cx}\mathbf{A}^{a}\,{}^{cx}\mathbf{W}_{Y_k}
+
{}^{cx}\boldsymbol{\Gamma}_{Y_k}
\right)
\in \mathbb{C}^{D \times d},
\quad k = 1, 2, \ldots, c,
\label{eq:acbls_enhance}
\end{equation}
where ${}^{cx}\zeta_k(\cdot)$ denotes the complex-valued activation function, ${}^{cx}\mathbf{W}_{Y_k} \in \mathbb{C}^{ab \times d}$ is a randomly generated weight matrix, and ${}^{cx}\boldsymbol{\Gamma}_{Y_k} \in \mathbb{C}^{D \times d}$ is the corresponding bias matrix. The enhancement outputs are concatenated as
\begin{equation}
{}^{cx}\mathbf{Y}^{c}
=
[{}^{cx}\mathbf{Y}_1, {}^{cx}\mathbf{Y}_2, \ldots, {}^{cx}\mathbf{Y}_c]
\in \mathbb{C}^{D \times cd}.
\label{eq:acbls_enhance_concat}
\end{equation}

\subsubsection{Output Segment and Augmented Modeling}
The feature and enhancement matrices are concatenated to form the complex hidden-layer output matrix
\begin{equation}
\mathbf{H}
=
[{}^{cx}\mathbf{A}^{a}, {}^{cx}\mathbf{Y}^{c}]
\in \mathbb{C}^{D \times (ab + cd)}.
\label{eq:acbls_H}
\end{equation}
To capture the complete second-order statistics, including both covariance and pseudo-covariance information, we construct the augmented hidden matrix as
\begin{equation}
\mathbf{H}_{a}
=
[\mathbf{H}, \mathbf{H}^{*}]
\in \mathbb{C}^{D \times 2(ab + cd)},
\label{eq:acbls_Ha}
\end{equation}
where $(\cdot)^{*}$ denotes complex conjugation.
The output weights of CA-BLS are obtained by solving the regularized least-squares problem
\begin{equation}
\arg\min_{\boldsymbol{\beta}_{a}}
\frac{1}{2}
\|\mathbf{H}_{a}\boldsymbol{\beta}_{a} - \mathbf{X}\|_{2}^{2}
+
\frac{\lambda_{a}}{2}
\|\boldsymbol{\beta}_{a}\|_{2}^{2},
\label{eq:acbls_obj}
\end{equation}
where $\boldsymbol{\beta}_{a} \in \mathbb{C}^{2(ab+cd) \times C}$ denotes the output weight matrix and $\lambda_{a}$ is a regularization parameter. The closed-form solution is given by
\begin{equation}
\boldsymbol{\beta}_{a}
=
(\mathbf{H}_{a}^{H}\mathbf{H}_{a} + \lambda_{a}\mathbf{I})^{-1}
\mathbf{H}_{a}^{H}\mathbf{X},
\label{eq:acbls_solution}
\end{equation}
where $(\cdot)^{H}$ denotes the Hermitian transpose and $\mathbf{I}$ is an identity matrix of appropriate dimension.

\subsection{Efficient Complex Augmented  Broad Learning System (ECA-BLS)}
Although CA-BLS significantly enhances modeling capability by exploiting complete second-order statistics, the augmentation process doubles the number of hidden nodes and requires complex-valued matrix inversion, leading to increased computational cost. To address this issue, we propose a computationally efficient reformulation termed as \emph{Efficient Complex Augmented  Broad Learning System (ECA-BLS)}. Figure~\ref{fig:ECA_bls} illustrates the architecture of ECA-BLS. The real-valued input is first transformed into the complex domain using phase encoding. Complex-valued feature and enhancement representations are then constructed following the BLS framework. Instead of explicitly forming the complex augmented matrix, ECA-BLS employs a real-valued augmentation strategy to implicitly capture conjugate information while significantly reducing computational complexity. Given the matrix $\mathbf{H}$ in \eqref{eq:acbls_H}, its real-valued augmented representation is defined as
\begin{equation}
\mathbf{H}_{r}
=
[\Re(\mathbf{H}), \Im(\mathbf{H})]
\in \mathbb{R}^{D \times 2(ab + cd)},
\label{eq:ECAbls_Hr}
\end{equation}
where $\Re(\cdot)$ and $\Im(\cdot)$ denote the real and imaginary parts, respectively. The output weights of ECA-BLS are obtained by solving
\begin{equation}
\arg\min_{\boldsymbol{\beta}_{r}}
\frac{1}{2}
\|\mathbf{H}_{r}\boldsymbol{\beta}_{r} - \mathbf{X}\|_{2}^{2}
+
\frac{\lambda_{r}}{2}
\|\boldsymbol{\beta}_{r}\|_{2}^{2},
\end{equation}
where $\boldsymbol{\beta}_{r} \in \mathbb{R}^{2(ab+cd) \times C}$ and $\lambda_{r}$ is a regularization parameter. The closed-form solution is
\begin{equation}
\boldsymbol{\beta}_{r}
=
(\mathbf{H}_{r}^{T}\mathbf{H}_{r} + \lambda_{r}\mathbf{I})^{-1}
\mathbf{H}_{r}^{T}\mathbf{X}.
\label{eq:ECAbls_solution}
\end{equation}
Algorithm and complexity of the proposed ECA-BLS are given in given in Section S.III. and S.IV. of Supplementary Material, respectively.

\subsection{Theoretical Justification of Equivalence between Proposed CA-BLS and ECA-BLS} The theoretical proof of equivalence between Proposed CA-BLS and ECA-BLS is given in Section S.II. of Supplementary Material.
\begin{figure}
    \centering
    \includegraphics[width=.8\linewidth]{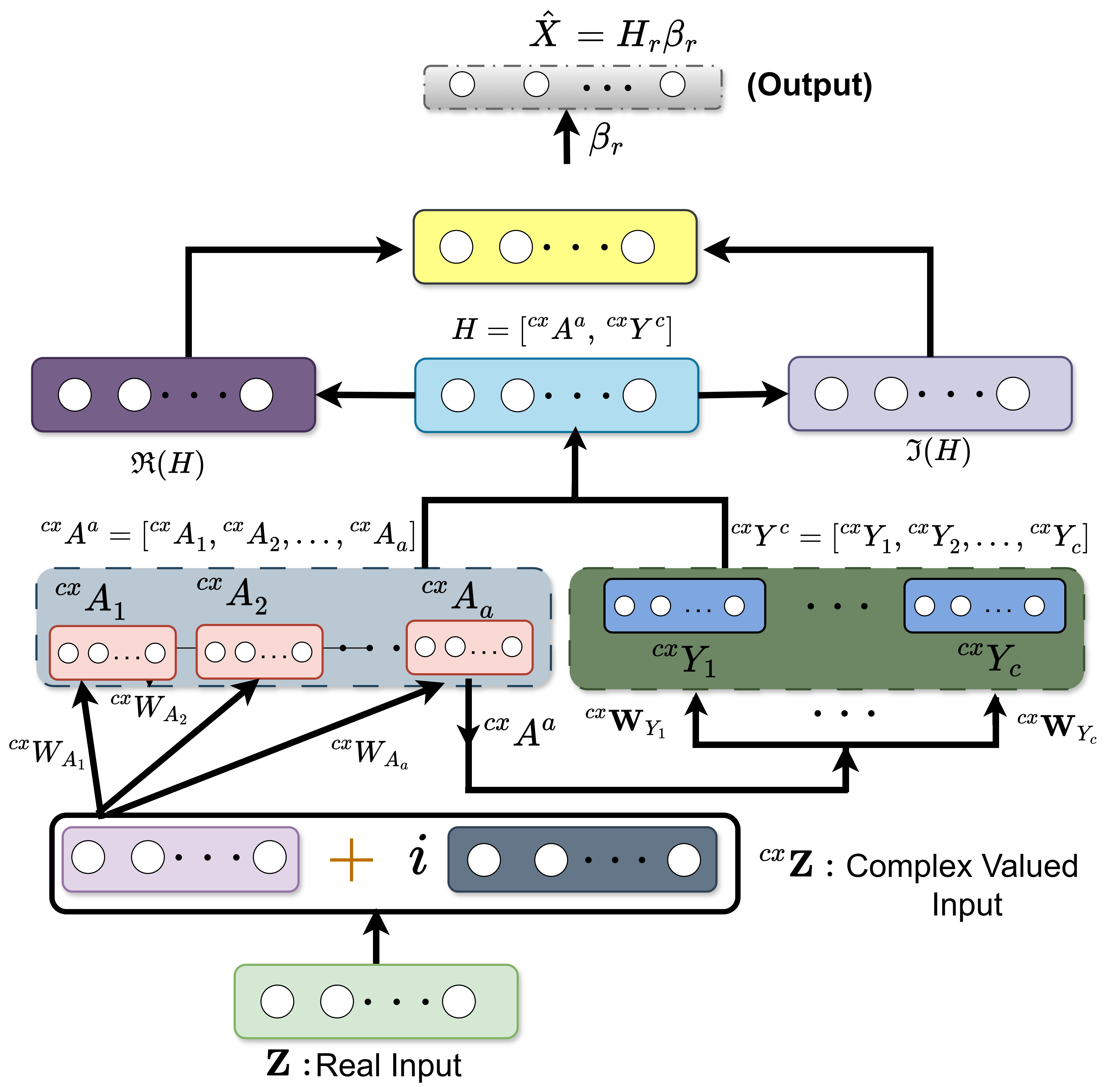}
    \caption{Architecture of ECA-BLS.}
    \label{fig:ECA_bls}
\end{figure}

The theoretical proof establishes a fundamental duality between the complex and real formulations:
\begin{itemize}
    \item {Mathematical Isomorphism:} 
    CA-BLS and ECA-BLS are algebraically isomorphic. The transformation matrix $\mathbf{V}$ defines a bijective mapping between the complex augmented feature space and the real expanded feature space, proving that both models optimize the exact same objective function in different coordinate systems.

    \item {Preservation of Phase Information:} 
    ECA-BLS is not a simplification that discards complex characteristics. Instead, it implicitly captures the full phase-amplitude relationships of the widely linear model by encoding them within the cross-correlations of the separated real and imaginary channels.

    \item {Zero Theoretical Loss:} 
    The proof guarantees that the output equivalence is exact, not approximate. Provided the regularization scaling $\lambda_a = 2\lambda_{r}$ is maintained, the real-valued implementation achieves the identical decision boundary as the complex-valued formulation with zero loss in accuracy.

    \item {Computational Efficiency:} 
    While theoretically equivalent, ECA-BLS is computationally superior. By avoiding complex arithmetic where a single multiplication requires four real multiplications and two additions, ECA-BLS leverages highly optimized standard real-valued linear algebra, significantly reducing computational cost and memory overhead.
\end{itemize}

Based on this theoretical equivalence between CA-BLS and ECA-BLS, and the efficiency of ECA-BLS, we have conducted the experiments in the ECA-BLS to compare the performance and statistical significance of our proposed models.


\begin{table*}[!htbp]
\centering
\caption{Classification accuracies along with average accuracy and average rank for ECA-BLS model evaluated against baseline models on 26 UCI and KEEL datasets.}
\label{tab:multi_small}
\renewcommand{\arraystretch}{.65} 
\resizebox{\textwidth}{!}{%
\begin{tabular}{lcccccccc}
 \hline
\textbf{Dataset} $\downarrow$  \textbf{\textbar{} Model} $\rightarrow$ &
  BLS\cite{chen2017broad} &
  H-ELM\cite{tang2015extreme} &
  GEIB \cite{10906533}   &
  F-BLS \cite{sajid2024intuitionistic} &
  IF-BLS \cite{sajid2024intuitionistic} &

  KRP-BLS \cite{10902561} &
  \textbf{ECA-BLS}$^{\dagger}$ \\
\hline
acute\_inflammation & 88.8889 & 91.6667 & {100} & 86.1111 & {100} & {100} & {100} \\
bank & 88.5041 & 88.8643 & 71.4602 & 85.409 & {89.1673} & 88.1356 & 88.1356 \\
brwisconsin & 92.1951 & 87.2549 & {96.5686} & 89.7561 & 63.4146 & 92.6829 & 87.8049 \\
chess\_krvkp & 91.9708 & 80.65 & 84.9687 & 85.09 & 86.1314 & 85.4015 & {93.8478} \\
cleve & 82.2222 & 75.2809 & 83.1461 & 56.6667 & 82.2222 & 80 & {85.5556} \\
ecoli-0-1\_vs\_5 & 94.4444 & 88.89 & {97.2222} & 90.94 & 51.3889 & 62.5 & 88.8889 \\
ecoli-0-1-4-7\_vs\_2-3-5-6 & 90.099 & 87.13 & 75 & {90.1} & 90.099 & 90.099 & 88.1188 \\
ecoli-0-1-4-7\_vs\_5-6 & 93 & 93 & 74.7475 & 94 & 91 & {95} & 90 \\
ecoli-0-6-7\_vs\_5 & 60.6061 & 90.9091 & 90.9091 & 66.6667 & {98.4848} & 93.9394 & 90.9091 \\
haber & 78.2609 & {82.61} & 70.3297 & 76.26 & 79.3478 & 78.2609 & 79.3478 \\
heart\_hungarian & {80.8989} & 64.7727 & 80.6818 & 59.5506 & 79.7753 & 74.1573 & {80.8989} \\
heart-stat & 87.6543 & 80.21 & 81.4815 & 78.89 & 85.1852 & {88.8889} & 82.716 \\
ionosphere & 86.7925 & 78.0952 & {88.5714} & 81.1321 & 86.7925 & 85.8491 & 83.0189 \\
led7digit-0-2-4-5-6-7-8-9\_vs\_1 & 73.6842 & 80.23 & {91.6667} & 80.45 & 47.3684 & 75.188 & 90.9774 \\
mammographic & 79.5848 & 78.94 & 80.9028 & 78.89 & 82.0069 & 79.2388 & {83.045} \\
monk1 & 41.3174 & 46.988 & 37.3494 & 49.1018 & 46.1078 & {55.0898} & 54.491 \\
musk\_1 & 76.9231 & 75.15 & 69.0141 & 78.32 & 79.7203 & 80.4196 & {90.2098} \\
oocytes\_merluccius\_nucleus\_4d & 56.6775 & 70.18 & 72.549 & 72.2 & {75.57} & 74.9186 & 73.6156 \\
ozone & 85.5834 & 96.58 & 83.6842 & 96.58 & 80.1564 & 83.1731 & {96.5834} \\
ripley & 89.3333 & 60.8 & {92} & 84.2667 & 88.8 & 88.5333 & 70.1333 \\
shuttle-6\_vs\_2-3 & 94.2029 & 95.65 & 92.2754 & {100} & {100} & 92 & 97.1014 \\
spambase & 82.042 & 69.27 & {87.7536} & 65.89 & 82.5455 & 70.6731 & 86.097 \\
spectf & 81.4815 & 75.75 & 67.5 & 80.32 & 75.3086 & 77.7778 & {82.716} \\
vertebral\_column\_2clases & 65.5591 & 69.89 & {72.043} & 65.56 & 64.2308 & 64.4086 & 69.8925 \\
wpbc & 71.1864 & {77.97} & 75.1724 & 76.27 & 70.5932 & 59.322 & 77.9661 \\
yeast-2\_vs\_4 & 83.871 & 82.81 & 83.4416 & 82.39 & {94.8387} & 87.7419 & 85.8065 \\
\hline
{\textbf{Average Accuracy}} & 80.6532 & 79.5978 & 80.7861 & 78.8773 & 79.6252 & 80.9 & \textbf{84.5337} \\
\hline
{\textbf{Average rank}} & 3.9423 & {4.7692} & 4.0577 & 4.6923 & 3.75 & 3.8654 & \textbf{2.9231}\\

\hline
\end{tabular}%
}
\end{table*}


\section*{S.V. Experimental Setup, Compared Models and Datasets}
Detailed experimental setup, compared models, and datasets are discussed in Section S.V. in Supplementary Material.
\subsection{Evaluation on UCI and KEEL Dataset}  
Table~\ref{tab:multi_small} summarizes the experimental results obtained on 26 benchmark datasets from the KEEL and UCI repositories, reporting classification accuracy for each method along with the corresponding average accuracy and average rank. The complete set of results, including the optimal hyperparameter configurations for the proposed model across all datasets, is provided in the Supplementary Material (Table~S.II). Across all evaluated datasets and competing methods, the proposed ECA-BLS consistently demonstrates superior performance. In particular, ECA-BLS achieves the highest average classification accuracy of 84.5337\%, clearly outperforming all baseline and state-of-the-art models considered in this study. The closest competitors, KRPBLS and GEIB, attain average accuracies of 80.9\% and 80.7861\%, respectively, resulting in a substantial performance margin in favor of ECA-BLS. All remaining state-of-the-art methods achieve average accuracies below 80\%, further emphasizing the effectiveness of the proposed approach.

These results provide strong empirical evidence that the proposed ECA-BLS offers enhanced generalization capability and robustness across a wide range of data distributions.
\noindent
\textbf{Statistical rank:}  
Average accuracy may be biased by exceptional performance on individual datasets and does not always reflect a model’s overall robustness. To obtain a fairer evaluation, we adopt a ranking-based assessment \cite{demvsar2006statistical}, where models are evaluated independently on each dataset. Lower ranks are assigned to better-performing models, while higher ranks indicate poorer performance. Given \(u\) models evaluated over \(v\) datasets, let \(R_r^s\) denote the rank of the \(r^{th}\) model on the \(s^{th}\) dataset. The average rank of the \(r^{th}\) model is computed as $R_r = \frac{1}{v} \sum_{s=1}^{v} R_r^s,$ which provides an aggregated and unbiased measure of comparative performance across all datasets. The proposed ECA-BLS achieves the lowest average rank of 2.9231 among all compared methods, indicating the most consistent performance across the evaluated datasets. The second and third best positions are attained by IF-BLS and KRPBLS, respectively. In contrast, recent advanced baselines such as GEIB and F-BLS obtain noticeably higher average ranks of 4.0577 and 4.6923, respectively. Likewise, classical models, including H-ELM and BLS, exhibit inferior rankings of 4.7692 and 3.9423, further highlighting the competitive advantage of the proposed approach. Overall, the ranking-based analysis provides compelling statistical evidence that ECA-BLS delivers superior and more stable performance than both classical and state-of-the-art competitors, thereby confirming its strong generalization capability across diverse benchmark datasets.

\noindent
\textbf{Friedman test:}  
The Friedman test \cite{friedman1940comparison} is employed to statistically examine whether significant performance differences exist among the compared models. Under the null hypothesis, all models are assumed to exhibit equivalent performance, corresponding to equal average ranks. The Friedman test statistic follows a chi-squared distribution, denoted by $\chi^2_F$, with $(u-1)$ degrees of freedom, and is computed as$\chi^2_F = \frac{12v}{u(u+1)}
\left(
\sum_{i=1}^{u} R_i^2 - \frac{u(u+1)^2}{4}
\right),$ where $u$ is the number of models, $v$ is the number of datasets, and $R_i$ denotes the average rank of the $i^{th}$ model. To obtain a more accurate test under finite sample sizes, the Friedman statistic is further transformed into an $F$-statistic given by $F_F =
\frac{(v-1)\chi^2_F}{v(u-1)-\chi^2_F},$ which follows an $F$-distribution with $(u-1)$ and $(v-1)(u-1)$ degrees of freedom. In our experiments, considering $u=7$ models evaluated over $v=26$ datasets, the computed statistics are $\chi^2_F = 12.9142$ and $F_F = 2.2564$. At a significance level of $5\%$, the critical value of the $F$-distribution with $(6,150)$ degrees of freedom is $2.1595$. Since $F_F > 2.1595$, the null hypothesis is rejected, indicating statistically significant performance differences among the compared models. Sections S.VII., S.VIII., and S.IX of Supplementary Material discussed the Win--Tie--Loss (W--T--L) sign test, sensitivity analysis, and evaluation of the proposed ECA-BLS under contaminated Gaussian noise, respectively.

\section{Conclusion and Future Work}
\label{Sec:Conclusion}

This paper fundamentally advances Broad Learning Systems by introducing complex augmented modeling into the BLS framework for the first time. Unlike existing BLS variants that focus on architectural or robustness enhancements within the real domain, the proposed approach overcomes the intrinsic inability of real-valued BLS to exploit complete second-order statistical information, thereby closing a long-standing gap in the literature. By employing phase-encoded complex and complex augmented representations within a widely linear learning formulation, the proposed CA-BLS jointly leverages covariance and pseudo-covariance information to capture nonlinear structures and coherence patterns that are inaccessible to conventional BLS models. To ensure computational efficiency and scalability, an equivalent real-domain formulation, termed ECA-BLS, is rigorously derived and proven to preserve the exact decision function of CA-BLS. Extensive experiments on diverse benchmark datasets, supported by Friedman and win–tie–loss statistical analyses, consistently demonstrate the superior generalization, robustness, and stability of ECA-BLS over classical BLS and recent state-of-the-art variants. Together, the theoretical guarantees and empirical evidence establish that complex augmented learning is a missing foundational component of BLS research. Consequently, this work elevates BLS from a purely real-domain randomized model to a more expressive, statistically complete, and practically deployable learning paradigm.
\bibliographystyle{IEEEtran}
\bibliography{ref}

@article{sepehri2024hierarchical,
  title={Hierarchical training of deep neural networks using early exiting},
  author={Sepehri, Yamin and Pad, Pedram and Y{\"u}z{\"u}g{\"u}ler, Ahmet Caner and Frossard, Pascal and Dunbar, L Andrea},
  journal={IEEE Transactions on Neural Networks and Learning Systems},
  volume={36},
  number={4},
  pages={6271--6285},
  year={2024},
  publisher={IEEE}
}

@ARTICLE{10413606,
  author={De Santis, Enrico and Martino, Alessio and Rizzi, Antonello},
  journal={IEEE Transactions on Pattern Analysis and Machine Intelligence}, 
  title={{Human Versus Machine Intelligence: Assessing Natural Language Generation Models Through Complex Systems Theory}}, 
  year={2024},
  volume={46},
  number={7},
  pages={4812-4829},
  doi={10.1109/TPAMI.2024.3358168}}

@article{chen2017broad,
  title={{Broad learning system: An effective and efficient incremental learning system without the need for deep architecture}},
  author={Chen, CL Philip and Liu, Zhulin},
  journal={IEEE transactions on neural networks and learning systems},
  volume={29},
  number={1},
  pages={10--24},
  year={2017},
  publisher={IEEE}
}

@article{zhang2016survey,
  title={A survey of randomized algorithms for training neural networks},
  author={Zhang, Le and Suganthan, Ponnuthurai N},
  journal={Information Sciences},
  volume={364},
  pages={146--155},
  year={2016},
  publisher={Elsevier}
}

@article{yu2019broad,
  title={Broad convolutional neural network based industrial process fault diagnosis with incremental learning capability},
  author={Yu, Wanke and Zhao, Chunhui},
  journal={IEEE Transactions on Industrial Electronics},
  volume={67},
  number={6},
  pages={5081--5091},
  year={2019},
  publisher={IEEE}
}

@INPROCEEDINGS{8616379,
  author={Feng, Shuang and Chen, C. L. Philip},
  booktitle={2018 IEEE International Conference on Systems, Man, and Cybernetics (SMC)}, 
  title={{Broad Learning System for Control of Nonlinear Dynamic Systems}}, 
  year={2018},
  volume={},
  number={},
  pages={2230-2235},
  doi={10.1109/SMC.2018.00383}}

@article{feng2018fuzzy,
  title={{Fuzzy broad learning system: A novel neuro-fuzzy model for regression and classification}},
  author={Feng, Shuang and Chen, CL Philip},
  journal={IEEE transactions on cybernetics},
  volume={50},
  number={2},
  pages={414--424},
  year={2018},
  publisher={IEEE}
}

@article{sajid2024intuitionistic,
  title={{Intuitionistic fuzzy broad learning system: Enhancing robustness against noise and outliers}},
  author={Sajid, M and Malik, Ashwani Kumar and Tanveer, Muhammad},
  journal={IEEE Transactions on Fuzzy Systems},
  volume={32},
  number={8},
  pages={4460--4469},
  year={2024},
  publisher={IEEE}
}

@ARTICLE{10906533,
  author={Chen, Wuxing and Yang, Kaixiang and Yu, Zhiwen and Nie, Feiping and Chen, C. L. Philip},
  journal={IEEE Transactions on Fuzzy Systems}, 
  title={{Adaptive Broad Network With Graph-Fuzzy Embedding for Imbalanced Noise Data}}, 
  year={2025},
  volume={33},
  number={6},
  pages={1949-1962},
  doi={10.1109/TFUZZ.2025.3543369}}

@ARTICLE{10902561,
  author={Deng, Wu and Shen, Jiuru and Ding, Jianming and Zhao, Huimin},
  journal={IEEE Internet of Things Journal}, 
  title={{Robust Dual-Model Collaborative Broad Learning System for Classification Under Label Noise Environments}}, 
  year={2025},
  volume={12},
  number={12},
  pages={21055-21067},
  doi={10.1109/JIOT.2025.3545741}}

@ARTICLE{9849162,
  author={Lee, ChiYan and Hasegawa, Hideyuki and Gao, Shangce},
  journal={IEEE/CAA Journal of Automatica Sinica}, 
  title={{Complex-Valued Neural Networks: A Comprehensive Survey}}, 
  year={2022},
  volume={9},
  number={8},
  pages={1406-1426},
  doi={10.1109/JAS.2022.105743}}

@ARTICLE{6138313,
  author={Hirose, Akira and Yoshida, Shotaro},
  journal={IEEE Transactions on Neural Networks and Learning Systems}, 
  title={{Generalization Characteristics of Complex-Valued Feedforward Neural Networks in Relation to Signal Coherence}}, 
  year={2012},
  volume={23},
  number={4},
  pages={541-551},
  doi={10.1109/TNNLS.2012.2183613}}

@ARTICLE{7321072,
  author={Zhang, Huisheng and Mandic, Danilo P.},
  journal={IEEE Transactions on Neural Networks and Learning Systems}, 
  title={{Is a Complex-Valued Stepsize Advantageous in Complex-Valued Gradient Learning Algorithms?}}, 
  year={2016},
  volume={27},
  number={12},
  pages={2730-2735},
  doi={10.1109/TNNLS.2015.2494361}}

@article{QING2023108792,
title = {Performance analysis of the augmented complex-valued least mean kurtosis algorithm},
journal = {Signal Processing},
volume = {203},
pages = {108792},
year = {2023},
issn = {0165-1684},
doi = {https://doi.org/10.1016/j.sigpro.2022.108792},
author = {Zhu Qing and Jingen Ni and Zhe Li and Engin Cemal Mengüç and Jie Chen and Danilo P. Mandic}
}

@article{XU201544,
title = {Convergence analysis of an augmented algorithm for fully complex-valued neural networks},
journal = {Neural Networks},
volume = {69},
pages = {44-50},
year = {2015},
issn = {0893-6080},

author = {Dongpo Xu and Huisheng Zhang and Danilo P. Mandic}
}

@inproceedings{sajid2025rvfl,
  title={{RVFL-X: A Novel Randomized Network Based on Complex Transformed Real-Valued Tabular Datasets}},
  author={Sajid, M and Akhtar, Mushir and Quadir, A and Tanveer, M},
  booktitle={2025 International Joint Conference on Neural Networks (IJCNN)},
  pages={1--8},
  year={2025},
  organization={IEEE}
}

@article{tang2015extreme,
  title={Extreme learning machine for multilayer perceptron},
  author={Tang, Jiexiong and Deng, Chenwei and Huang, Guang-Bin},
  journal={IEEE transactions on neural networks and learning systems},
  volume={27},
  number={4},
  pages={809--821},
  year={2015},
  publisher={Ieee}
}

@article{dua2017uci,
  title={{{UCI} machine learning repository.}},
  author={Dua, Dheeru and Graff, Casey},
journal={Available: http://archive.ics.uci.edu/ml},
  year={2017},
  publisher={Beijing China}
}

@article{derrac2015keel,
  title={{Keel data-mining software tool: Data set repository, integration of algorithms and experimental analysis framework}},
  author={Derrac, J and Garcia, S and Sanchez, L and Herrera, F},
  journal={J. Mult. Valued Logic Soft Comput},
  volume={17},
  pages={255--287},
  year={2015}
}

@article{demvsar2006statistical,
  title={{Statistical comparisons of classifiers over multiple data sets}},
  author={Dem{\v{s}}ar, Janez},
  journal={The Journal of Machine Learning Research},
  volume={7},
  pages={1--30},
  year={2006},
  publisher={JMLR. org}
}

@article{friedman1940comparison,
  title={{A comparison of alternative tests of significance for the problem of m rankings}},
  author={Friedman, Milton},
  journal={The Annals of Mathematical Statistics},
  volume={11},
  number={1},
  pages={86--92},
  year={1940},
  publisher={JSTOR}
}

@article{TANVEER2026112940,
title = {{BLS-CIL: Class Imbalance Broad Learning System via Dual Weighting and Layer Trimming}},
journal = {Pattern Recognition},
volume = {174},
pages = {112940},
year = {2026},
issn = {0031-3203},

author = {M. Tanveer and A. Mishra and M. Sajid and A. Quadir}
}

@article{TANVEER2026108914,
title = {{Fuzzy-driven broad learning system with class probability and density awareness for multi-view data}},
journal = {Neural Networks},
volume = {201},
pages = {108914},
year = {2026},
issn = {0893-6080},

author = {M. Tanveer and M. Pathak and M. Sajid and A. Quadir and  Priyamvada}
}

@article{quadir2026hypergraph,
  title={Hypergraph neural network with state space models for node classification},
  author={Quadir, Abdul and Tanveer, M},
  journal={Engineering Applications of Artificial Intelligence},
  volume={163},
  pages={112922},
  year={2026},
  publisher={Elsevier}
}

@article{quadir2026garfln,
  title={{GARFLN: Geodesic Adaptive Riemannian Functional Link Network}},
  author={Quadir, A and Tanveer, M},
  journal={Pattern Recognition},
  pages={114163},
  year={2026},
  publisher={Elsevier}
}

@article{tanveer2026eda,
  title={{EDA-OCBLS: An error-distribution aware one-class broad learning system for anomaly detection}},
  author={Tanveer, M and Mishra, A and Quadir, A and Sajid, M},
  journal={Neural Networks},
  pages={109177},
  year={2026},
  publisher={Elsevier}
}

@article{tanveer2026fuzzy,
  title={Fuzzy-driven broad learning system with class probability and density awareness for multi-view data},
  author={Tanveer, M and Pathak, M and Sajid, M and Quadir, Abdul and others},
  journal={Neural Networks},
  pages={108914},
  year={2026},
  publisher={Elsevier}
}

@inproceedings{quadir2025twin,
  title={Twin restricted kernel machines for multiview classification},
  author={Quadir, Abdul and Sajid, M and Akhtar, Mushir and Tanveer, Muhammad},
  booktitle={2025 International Joint Conference on Neural Networks (IJCNN)},
  pages={1--8},
  year={2025},
  organization={IEEE}
}

@article{quadir2025trkm,
  title={{TRKM: Twin restricted kernel machines for classification and regression}},
  author={Quadir, Abdul and Tanveer, Muhammad},
  journal={Neural Networks},
  pages={108449},
  year={2025},
  publisher={Elsevier}
}

\clearpage
\section*{Supplementary Material}

\section*{S.I. Related Works}\label{Sec:Related_work}
This section provides a concise yet comprehensive review of the Broad Learning System (BLS), together with its mathematical formulation.

\renewcommand{\thefigure}{S.1}
\begin{figure}[h]
    \centering
    \includegraphics[width=\linewidth]{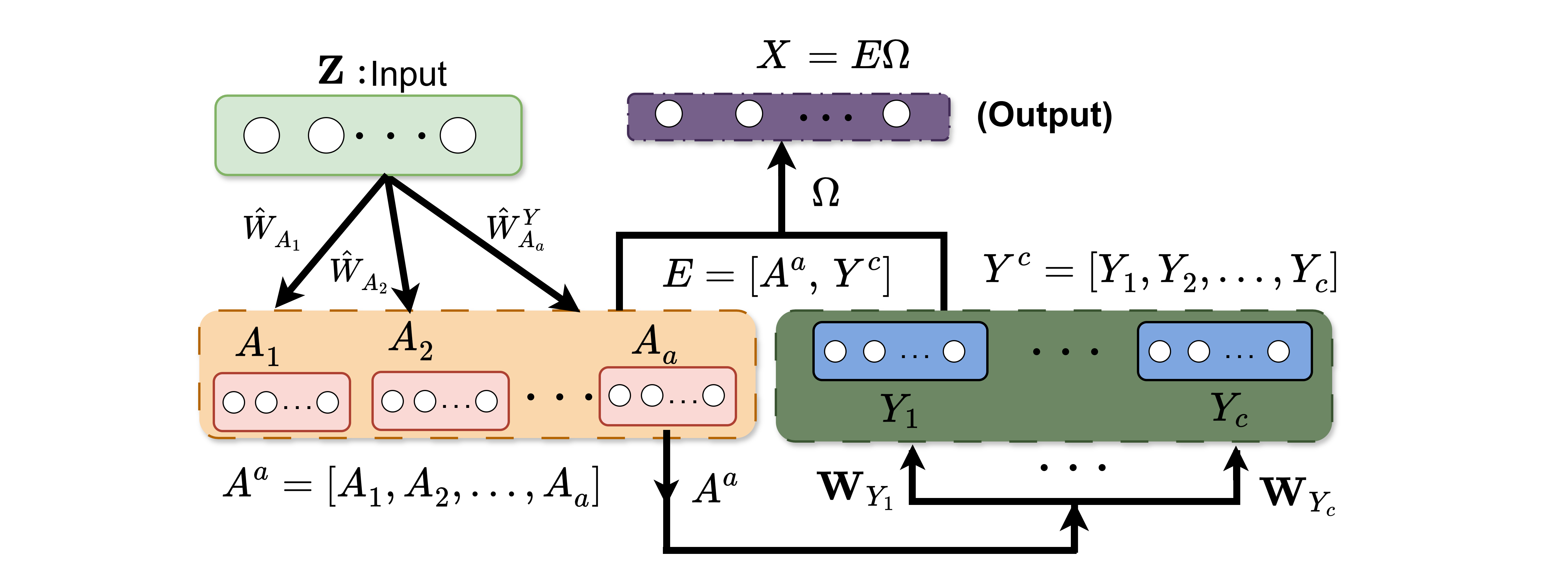}
    \caption{Architecture of the Broad Learning System (BLS).}
    \label{fig:bls}
\end{figure}

\subsection{Broad Learning System (BLS)}
The Broad Learning System (BLS), proposed by Chen \emph{et al.}, is a flat network architecture designed to achieve fast learning with strong generalization capability. Unlike deep neural networks, BLS expands the network width rather than depth and consists of four main components: the input layer, feature learning layer, enhancement layer, and output layer. The connections between successive layers are randomly generated, while the output weights are computed analytically using a closed-form solution. The overall architecture of BLS is illustrated in Fig.~\ref{fig:bls}. The mathematical formulation of each component is described below.

\subsubsection{Feature Learning Segment}
Assume that the feature learning layer is composed of $a$ feature groups, each containing $b$ nodes. The output of the $i$th feature group is defined as
\begin{equation}
\mathbf{A}_i = \sigma_i\!\left(\mathbf{Z}\mathbf{W}_{A_i} + \boldsymbol{\xi}_{A_i}\right) \in \mathbb{R}^{D \times b}, 
\quad i = 1, 2, \ldots, a,
\end{equation}
where $\sigma_i(\cdot)$ denotes the nonlinear activation function, $\mathbf{W}_{A_i} \in \mathbb{R}^{B \times b}$ is a randomly generated weight matrix, and $\boldsymbol{\xi}_{A_i} \in \mathbb{R}^{D \times b}$ is the corresponding bias matrix. The outputs of all feature groups are concatenated to form the augmented feature matrix
\begin{equation}
\mathbf{A}^{a} = [\mathbf{A}_1, \mathbf{A}_2, \ldots, \mathbf{A}_a] \in \mathbb{R}^{D \times ab}.
\end{equation}

\subsubsection{Enhancement Segment}
To further enrich the feature representation, the augmented feature matrix $\mathbf{A}^{a}$ is projected into an enhancement layer through random mappings followed by nonlinear activation. Let $c$ denote the number of enhancement groups, with each group consisting of $d$ nodes. The output of the $k$th enhancement group is given by
\begin{equation}
\mathbf{Y}_k = \zeta_k\!\left(\mathbf{A}^{a}\mathbf{W}_{Y_k} + \boldsymbol{\Gamma}_{Y_k}\right) \in \mathbb{R}^{D \times d}, 
\quad k = 1, 2, \ldots, c,
\end{equation}
where $\zeta_k(\cdot)$ is a nonlinear activation function, $\mathbf{W}_{Y_k} \in \mathbb{R}^{ab \times d}$ is a randomly generated weight matrix, and $\boldsymbol{\Gamma}_{Y_k} \in \mathbb{R}^{D \times d}$ is the bias matrix. The outputs of all enhancement groups are concatenated to form
\begin{equation}
\mathbf{Y}^{c} = [\mathbf{Y}_1, \mathbf{Y}_2, \ldots, \mathbf{Y}_c] \in \mathbb{R}^{D \times cd}.
\end{equation}
\subsubsection{Output Segment}
The output layer receives the concatenation of the feature and enhancement matrices. The network output is computed as
\begin{equation}
\mathbf{O} = [\mathbf{A}^{a}, \mathbf{Y}^{c}]\,\boldsymbol{\Omega} = \mathbf{E}\boldsymbol{\Omega},
\end{equation}
where
\begin{equation}
\mathbf{E} = [\mathbf{A}^{a}, \mathbf{Y}^{c}] \in \mathbb{R}^{D \times (ab + cd)}
\end{equation}
is the final hidden-layer output matrix, and $\boldsymbol{\Omega} \in \mathbb{R}^{(ab + cd) \times C}$ denotes the output weight matrix. The optimal output weights are obtained analytically using the Moore–Penrose pseudoinverse as
\begin{equation}
\boldsymbol{\Omega} = \mathbf{E}^{\dagger}\mathbf{X},
\end{equation}
where $\mathbf{E}^{\dagger}$ represents the pseudoinverse of $\mathbf{E}$.

\section*{S.II Theoretical proof on the equivalence of the proposed CA-BLS and ECA-BLS}
\begin{theorem}
Suppose the training samples are distinct and the same hidden representation is used. 
Let $\mathbf{\beta}_a$ denote the output weight matrix obtained from CA-BLS with regularization parameter $\lambda_a$. 
Let $\mathbf{\beta}_{r}$ denote the output weight matrix obtained from ECA-BLS with regularization parameter $\lambda_{r}$. 
If $\lambda_a = 2\lambda_{r}$, then the outputs of the two models are identical, i.e.,
\begin{equation}
\mathbf{H}_a \mathbf{\beta}_a = \mathbf{H}_{r} \mathbf{\beta}_{r}.
\end{equation}
\end{theorem}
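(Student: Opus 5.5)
The plan is to exhibit an explicit linear change of coordinates between the complex augmented hidden matrix $\mathbf{H}_a$ and the real expanded matrix $\mathbf{H}_r$. Then we work with the closed-form solutions \eqref{eq:acbls_solution} and \eqref{eq:ECAbls_solution} written in their ``kernel'' (push-through) form, so that only Gram matrices of the form $\mathbf{H}\mathbf{H}^{H}$ of size $D\times D$ appear.

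\emph{Step 1 (coordinate change).} Write $\mathbf{H}=\mathbf{R}+i\mathbf{J}$ with $\mathbf{R}=\Re(\mathbf{H})$ and $\mathbf{J}=\Im(\mathbf{H})$, so that $\mathbf{H}^{*}=\mathbf{R}-i\mathbf{J}$. With $m=ab+cd$, define
\begin{equation*}
\mathbf{V}=\begin{bmatrix}\mathbf{I}_m & \mathbf{I}_m\\ i\mathbf{I}_m & -i\mathbf{I}_m\end{bmatrix}\in\mathbb{C}^{2m\times 2m}.
\end{equation*}
A direct block multiplication gives $\mathbf{H}_a=[\mathbf{H},\mathbf{H}^{*}]=[\mathbf{R},\mathbf{J}]\,\mathbf{V}=\mathbf{H}_r\mathbf{V}$. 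One also checks that $\mathbf{V}\mathbf{V}^{H}=2\mathbf{I}_{2m}$. Hence $\mathbf{V}/\sqrt{2}$ is unitary and, in particular, $\mathbf{V}$ is a bijection between the two feature spaces.

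\emph{Step 2 (push-through form).} For any matrix $\mathbf{M}$ and any $\lambda>0$ we have the identity $(\mathbf{M}^{H}\mathbf{M}+\lambda\mathbf{I})^{-1}\mathbf{M}^{H}=\mathbf{M}^{H}(\mathbf{M}\mathbf{M}^{H}+\lambda\mathbf{I})^{-1}$. It follows from $\mathbf{M}^{H}(\mathbf{M}\mathbf{M}^{H}+\lambda\mathbf{I})=(\mathbf{M}^{H}\mathbf{M}+\lambda\mathbf{I})\mathbf{M}^{H}$, and both inverses exist because each matrix is Hermitian positive definite for $\lambda>0$. Applying it to \eqref{eq:acbls_solution} and \eqref{eq:ECAbls_solution} gives
\begin{equation*}
\mathbf{H}_a\boldsymbol{\beta}_a=\mathbf{H}_a\mathbf{H}_a^{H}(\mathbf{H}_a\mathbf{H}_a^{H}+\lambda_a\mathbf{I})^{-1}\mathbf{X},\qquad
\mathbf{H}_r\boldsymbol{\beta}_r=\mathbf{H}_r\mathbf{H}_r^{T}(\mathbf{H}_r\mathbf{H}_r^{T}+\lambda_r\mathbf{I})^{-1}\mathbf{X}.
\end{equation*}

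\emph{Step 3 (compare Gram matrices).} By Step 1, $\mathbf{H}_a\mathbf{H}_a^{H}=\mathbf{H}_r\mathbf{V}\mathbf{V}^{H}\mathbf{H}_r^{T}=2\,\mathbf{H}_r\mathbf{H}_r^{T}$, which is real. Writing $\mathbf{K}=\mathbf{H}_r\mathbf{H}_r^{T}$, we obtain
\begin{equation*}
\mathbf{H}_a\boldsymbol{\beta}_a=2\mathbf{K}(2\mathbf{K}+\lambda_a\mathbf{I})^{-1}\mathbf{X}=\mathbf{K}\bigl(\mathbf{K}+\tfrac{\lambda_a}{2}\mathbf{I}\bigr)^{-1}\mathbf{X}.
\end{equation*}
Under $\lambda_a=2\lambda_r$ this is exactly $\mathbf{H}_r\boldsymbol{\beta}_r$. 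As a by-product, the CA-BLS output is real even though $\boldsymbol{\beta}_a$ itself is complex. Alternatively, one can argue in weight space: $\boldsymbol{\beta}_a=\tfrac12\mathbf{V}^{H}\boldsymbol{\beta}_r$, which follows from $\mathbf{H}_a^{H}\mathbf{H}_a=\mathbf{V}^{H}\mathbf{H}_r^{T}\mathbf{H}_r\mathbf{V}$ together with $\mathbf{V}^{-1}=\tfrac12\mathbf{V}^{H}$.

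The computation itself is routine. The only point needing care is interpretive. The objective \eqref{eq:acbls_obj} is posed over complex $\boldsymbol{\beta}_a$ with a real target, so one must take \eqref{eq:acbls_solution}, with the Hermitian transpose, as the definition of the CA-BLS solution. One must also track that the factor $2$ from $\mathbf{V}\mathbf{V}^{H}$ is what produces the scaling $\lambda_a=2\lambda_r$. The hypothesis that the samples are distinct is not actually needed, since $\lambda>0$ already guarantees invertibility.
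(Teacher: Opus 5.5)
Your proof is correct, but your main line differs from the paper's. The paper works entirely in weight space. It substitutes $\mathbf{H}_a=\mathbf{H}_r\mathbf{V}$ into the $2(ab+cd)\times 2(ab+cd)$ normal equations and writes $\lambda_a\mathbf{I}=\tfrac{\lambda_a}{2}\mathbf{V}^H\mathbf{V}$, which gives the factorization $\mathbf{H}_a^H\mathbf{H}_a+\lambda_a\mathbf{I}=\mathbf{V}^H(\mathbf{H}_r^T\mathbf{H}_r+\tfrac{\lambda_a}{2}\mathbf{I})\mathbf{V}$. It then inverts this using $\mathbf{V}^{-1}=\tfrac12\mathbf{V}^H$ to get $\boldsymbol{\beta}_a=\tfrac12\mathbf{V}^H\boldsymbol{\beta}_r$, and finally multiplies by $\mathbf{H}_a$. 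This is exactly the alternative you sketch at the end of Step 3.

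Your primary route instead uses the push-through identity to move to the $D\times D$ sample-space Gram matrices. There $\mathbf{V}$ drops out at once through $\mathbf{H}_a\mathbf{H}_a^H=2\mathbf{H}_r\mathbf{H}_r^T$. This gains you three things:
\begin{itemize}
\item You never invert $\mathbf{V}$ or use $\mathbf{V}^H\mathbf{V}=2\mathbf{I}$; only $\mathbf{V}\mathbf{V}^H=2\mathbf{I}$ is needed, so the argument would survive a non-square $\mathbf{V}$ with orthogonal rows.
\item The origin of the factor $2$ in $\lambda_a=2\lambda_r$ is transparent.
\item Realness of the CA-BLS output is immediate.
\end{itemize}

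The paper's route gives an explicit relation between the weight matrices themselves. Because the augmentation acts column-wise, $\mathbf{H}_a^{\mathrm{test}}=\mathbf{H}_r^{\mathrm{test}}\mathbf{V}$ with the same $\mathbf{V}$. Hence predictions on unseen inputs agree immediately: $\mathbf{H}_a^{\mathrm{test}}\boldsymbol{\beta}_a=\mathbf{H}_r^{\mathrm{test}}\mathbf{V}\,\tfrac12\mathbf{V}^H\boldsymbol{\beta}_r=\mathbf{H}_r^{\mathrm{test}}\boldsymbol{\beta}_r$. That is what the paper's ``identical decision function'' claim actually requires. Your kernel form gives the same via the cross-Gram identity $\mathbf{H}_a^{\mathrm{test}}\mathbf{H}_a^H=2\mathbf{H}_r^{\mathrm{test}}\mathbf{H}_r^T$, but you would have to state it.

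Your remark that the distinct-samples hypothesis is unnecessary once $\lambda>0$ is right, and it applies equally to the paper's proof.
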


\begin{proof}
Let $\mathbf{H} \in \mathbb{C}^{N \times M}$ be the complex-valued hidden-layer output matrix.

\paragraph{Augmented representations.}
The CA-BLS augmented matrix is defined as
\begin{equation}
\mathbf{H}_a = [\,\mathbf{H},\, \mathbf{H}^*\,] \in \mathbb{C}^{D \times 2(ab+cd)},
\end{equation}
while the ECA-BLS real augmented matrix is given by
\begin{equation}
\mathbf{H}_{r} = [\,\Re(\mathbf{H}),\, \Im(\mathbf{H})\,] \in \mathbb{R}^{D \times 2(ab+cd)}.
\end{equation}

\paragraph{Linear transformation between augmented matrices.}
Define the transformation matrix
\begin{equation}
\mathbf{V} =
\begin{bmatrix}
\mathbf{I} & \mathbf{I} \\
i\mathbf{I} & -i\mathbf{I}
\end{bmatrix}
\in \mathbb{C}^{2(ab+cd) \times 2(ab+cd)},
\end{equation}
where $\mathbf{I}$ denotes the $M \times M$ identity matrix.
Using $\mathbf{H} = \Re(\mathbf{H}) + i\Im(\mathbf{H})$ and 
$\mathbf{H}^* = \Re(\mathbf{H}) - i\Im(\mathbf{H})$, it follows that
\begin{equation}
\mathbf{H}_a = \mathbf{H}_{r}\mathbf{V}.
\label{eq:Ha_relation}
\end{equation}

It is straightforward to verify that
\begin{equation}
\mathbf{V}\mathbf{V}^H = 2\mathbf{I} \implies \mathbf{V}^{-1} = \frac{1}{2}\mathbf{V}^H.
\label{eq:V_property}
\end{equation}

\paragraph{Output weight solutions.}
The CA-BLS output weights are computed as
\begin{equation}
\mathbf{\beta}_a = 
(\mathbf{H}_a^H \mathbf{H}_a + \lambda_a \mathbf{I})^{-1} 
\mathbf{H}_a^H \mathbf{x}.
\end{equation}
Substituting $\mathbf{H}_a = \mathbf{H}_{r}\mathbf{V}$ and using the property $\mathbf{H}_{r}^H = \mathbf{H}_{r}^T$ (since $\mathbf{H}_{r}$ is real), we obtain:
\begin{equation}
\mathbf{\beta}_a = 
(\mathbf{V}^H \mathbf{H}_{r}^T \mathbf{H}_{r} \mathbf{V} + \lambda_a \mathbf{I})^{-1} 
\mathbf{V}^H \mathbf{H}_{r}^T \mathbf{x}.
\end{equation}

To simplify the inverse term, we use the identity $\mathbf{I} = \frac{1}{2}\mathbf{V}^H\mathbf{V}$ derived from \eqref{eq:V_property}. We substitute $\lambda_a \mathbf{I} = \frac{\lambda_a}{2} \mathbf{V}^H \mathbf{V}$:
\begin{equation}
\mathbf{\beta}_a = 
\left( \mathbf{V}^H \mathbf{H}_{r}^T \mathbf{H}_{r} \mathbf{V} + \mathbf{V}^H \left(\frac{\lambda_a}{2}\mathbf{I}\right) \mathbf{V} \right)^{-1} 
\mathbf{V}^H \mathbf{H}_{r}^T \mathbf{x}.
\end{equation}
Factoring out $\mathbf{V}^H$ and $\mathbf{V}$:
\begin{equation}
\mathbf{\beta}_a = 
\left[ \mathbf{V}^H \left( \mathbf{H}_{r}^T \mathbf{H}_{r} + \frac{\lambda_a}{2}\mathbf{I} \right) \mathbf{V} \right]^{-1} 
\mathbf{V}^H \mathbf{H}_{r}^T \mathbf{x}.
\end{equation}
Using the inverse product property $(ABC)^{-1} = C^{-1}B^{-1}A^{-1}$:
\begin{equation}
\mathbf{\beta}_a = 
\mathbf{V}^{-1} \left( \mathbf{H}_{r}^T \mathbf{H}_{r} + \frac{\lambda_a}{2}\mathbf{I} \right)^{-1} (\mathbf{V}^H)^{-1} \mathbf{V}^H \mathbf{H}_{r}^T \mathbf{x}.
\end{equation}
Simplifying using $(\mathbf{V}^H)^{-1}\mathbf{V}^H = \mathbf{I}$ and $\mathbf{V}^{-1} = \frac{1}{2}\mathbf{V}^H$:
\begin{equation}
\mathbf{\beta}_a 
= \frac{1}{2}\mathbf{V}^H 
\left(\mathbf{H}_{r}^T \mathbf{H}_{r} + \frac{\lambda_a}{2} \mathbf{I}\right)^{-1} 
\mathbf{H}_{r}^T \mathbf{x}.
\label{eq:beta_relation}
\end{equation}

The ECA-BLS solution with regularization parameter $\lambda_{r}$ is defined as:
\begin{equation}
\mathbf{\beta}_{r} 
= \left(\mathbf{H}_{r}^T \mathbf{H}_{r} + \lambda_{r} \mathbf{I}\right)^{-1} 
\mathbf{H}_{r}^T \mathbf{x}.
\label{b_r_inverting}
\end{equation}
Comparing this with \eqref{eq:beta_relation}, if we set $\lambda_{r} = \frac{\lambda_a}{2}$ (or equivalently $\lambda_a = 2\lambda_{r}$), we establish the relationship:
\begin{equation}
\mathbf{\beta}_a = \frac{1}{2}\mathbf{V}^H \mathbf{\beta}_{r}.
\end{equation}

\paragraph{Equivalence of outputs.}
Finally, the CA-BLS output is calculated as:
\begin{equation}
\mathbf{H}_a \mathbf{\beta}_a 
= (\mathbf{H}_{r}\mathbf{V}) 
\left(\frac{1}{2}\mathbf{V}^H \mathbf{\beta}_{r}\right).
\end{equation}
Using $\mathbf{V}\mathbf{V}^H = 2\mathbf{I}$, we obtain:
\begin{equation}
\mathbf{H}_a \mathbf{\beta}_a 
= \mathbf{H}_{r} \left( \frac{1}{2} (2\mathbf{I}) \right) \mathbf{\beta}_{r} = \mathbf{H}_{r}\mathbf{\beta}_{r}.
\end{equation}
This completes the proof.
\end{proof}


\section*{S.III Algorithm of proposed ECA-BLS}
The algorithm of our proposed model is: 

\begin{algorithm}[h]
\caption{Algorithm of the proposed ECA-BLS}
\label{alg:ECABLS}
\begin{algorithmic}[1]
\State \textbf{Input:} 
Input matrix $\mathbf{Z} \in \mathbb{R}^{D \times B}$, 
target matrix $\mathbf{x} \in \mathbb{R}^{D \times C}$, 
where the dataset consists of $D$ samples with $B$ features and $C$ classes.

\State \textbf{Parameters:} 
$\lambda_r$,
$a$ ,
$b$,
$c$ ,
$d$.

\State \textbf{Transform:} 
Map real-valued input to the complex domain using phase encoding
${}^{cx}\mathbf{Z} = \exp(j\pi \mathbf{Z})$.

\State \textbf{Find:} 
Feature node outputs ${}^{cx}\mathbf{A}_i \in \mathbb{C}^{D \times b}$,
$i = 1,2,\ldots,a$, using Eq. 2 of the main manuscript.

\State \textbf{Find:} 
Concatenated feature matrix
${}^{cx}\mathbf{A}^a = [{}^{cx}\mathbf{A}_1,{}^{cx}\mathbf{A}_2,\ldots,{}^{cx}\mathbf{A}_a] \in \mathbb{C}^{D \times ab}$.

\State \textbf{Find:} 
Enhancement node outputs ${}^{cx}\mathbf{Y}_k \in \mathbb{C}^{D \times d}$ using Eq. 5 of the main manuscript,
$k = 1,2,\ldots,c$, using random orthogonal complex weights.

\State \textbf{Find:} 
Concatenated enhancement matrix
${}^{cx}\mathbf{Y}^c = [{}^{cx}\mathbf{Y}_1,{}^{cx}\mathbf{Y}_2,\ldots,{}^{cx}\mathbf{Y}_c] \in \mathbb{C}^{D \times cd}$.

\State \textbf{Find:} 
Hidden-layer output matrix
$\mathbf{H} = [{}^{cx}\mathbf{A}^A, {}^{cx}\mathbf{Y}^c] \in \mathbb{C}^{D \times (ab+cd)}$.

\State \textbf{Construct:} 
Real-augmented hidden matrix
$\mathbf{H}_r = [\Re(\mathbf{H}), \Im(\mathbf{H})] \in \mathbb{R}^{D \times 2(ab+cd)}$.

\State \textbf{Output:} 
Compute output weight matrix
\[
\boldsymbol{\beta}_r =
(\mathbf{H}_r^{T}\mathbf{H}_r + \lambda_r \mathbf{I})^{-1}
\mathbf{H}_r^{T}\mathbf{X}.
\]
\end{algorithmic}
\end{algorithm}


\section*{S. IV Complexity analysis of proposed models}
For ECA-BLS, the augmented hidden matrix
$\mathbf{H}_{r} \in \mathbb{R}^{D \times 2(ab+cd)}$ is real-valued.
The computation of $\mathbf{H}_{r}^{T}\mathbf{H}_{r}$ requires
$D \times 2(ab+cd) \times 2(ab+cd)$ real multiplications, leading to a
quadratic term of order $\mathcal{O}\!\left(8D(ab+cd)^2\right)$.
Subsequently, inverting Eq. 12 of the main manuscript results in a $2 (ab+cd)\times2(ab+cd)$ matrix, which incurs a cubic cost of $\mathcal{O}\!\left(8(ab+cd)^3\right)$.
Finally, the multiplication $\mathbf{H}_{r}^{T}\mathbf{x}$ contributes
$\mathcal{O}\!\left(2D(ab+cd)C\right)$ additional real multiplications.
Together, these operations yield an overall training complexity of
$\mathcal{O}\!\left(8(ab+cd)^3 + 8D(ab+cd)^2 + 2D(ab+cd)C\right)$, with the
same asymptotic order for addition operations.

In contrast, CA-BLS operates on a complex augmented hidden matrix
$\mathbf{H}_{a} \in \mathbb{C}^{D \times 2(ab+cd)}$ and employs Hermitian
products and complex-valued matrix inversion. The computation of
$\mathbf{H}_{a}^{H}\mathbf{H}_{a}$ involves complex multiplications,
each equivalent to four real multiplications and two real additions.
Consequently, the quadratic term increases to
$\mathcal{O}\!\left(32D(ab+cd)^2\right)$ real multiplications and
$\mathcal{O}\!\left(24D(ab+cd)^2\right)$ real additions. Similarly,
inverting the $2(ab+cd)\times2(ab+cd)$ complex matrix in Eq. 10 of the main manuscript incurs a cubic cost
of $\mathcal{O}\!\left(32(ab+cd)^3\right)$ real multiplications and
$\mathcal{O}\!\left(24(ab+cd)^3\right)$ real additions. The computation
of $\mathbf{H}_{a}^{H}\mathbf{x}$ further contributes
$\mathcal{O}\!\left(8D(ab+cd)C\right)$ real multiplications and
$\mathcal{O}\!\left(6D(ab+cd)C\right)$ real additions.

Compared to CA-BLS, ECA-BLS reduces the dominant cubic multiplication complexity from $32n^3$ to $8n^3$, achieving an exact reduction of $75\%$. Likewise, the dominant cubic addition complexity decreases from $24n^3$ to $8n^3$, corresponding to an approximate reduction of $66.7\%$. Since the cubic term dominates the overall training cost, ECA-BLS achieves an effective fourfold reduction in computational complexity while preserving an identical decision function, as guaranteed by the theoretical equivalence analysis.

\section*{S.V. Experimental Setup, Compared Models and Datasets}

\textbf{Experimental Setup:} The experimental configuration includes a PC equipped with an Intel(R) Xeon(R) Gold 6226R CPU running at a speed of 2.90~GHz and
featuring 128~GB of RAM. The system runs on the Windows~11 platform and executes tasks using Python~3.11. The dataset is randomly divided into training and testing subsets in a ratio of 70:30, respectively. To tune the model hyperparameters, five-fold cross-validation is employed, followed by a comprehensive grid search over predefined parameter ranges. The regularization parameter $\lambda_{r}$ is selected from $\{10^{-5}, 10^{-4}, \ldots, 10^{5}\}$, Additionally, $a$ denotes the number of feature groups, $b$ the number of nodes within each
feature group, $d$ the number of enhancement nodes in each enhancement group, and $c$ the
total number of enhancement groups.
The parameter $a$ is selected from $\{5, 10, 15, \ldots, 50\}$, $b$ from $\{1, 3, 5, \ldots, 21\}$,
$d$ from $\{5, 15, 25, \ldots, 105\}$, and $c$ is fixed to $1$. The ECA-BLS network utilizes the holomorphic inverse hyperbolic sine activation function, known as $\operatorname{arcsinh}(\cdot)$.

\renewcommand{\thetable}{S.I}
\begin{table*}[htbp]
    \caption{Pairwise win-tie-loss test of proposed models and baseline models on MCD category UCI datasets.}
    \label{win tie loss sign test for linear}
    \resizebox{1.0\textwidth}{!}{
\begin{tabular}{lccccccccccccc}
\hline
\multicolumn{1}{c}{$\downarrow$  \textbf{Model} $\rightarrow$} &
\multicolumn{1}{c}{\textbf{BLS} \textbf{}\cite{chen2017broad}  } &
\multicolumn{1}{c}{\textbf{H-ELM} \cite{tang2015extreme} } &
\multicolumn{1}{c}{\textbf{GEIB} \cite{10906533} } &
\multicolumn{1}{c}{\textbf{F-BLS} \cite{sajid2024intuitionistic}}&
\multicolumn{1}{c}{\textbf{IF-BLS} \cite{sajid2024intuitionistic}}&
\multicolumn{1}{c}{\textbf{KRP-BLS} \cite{10902561}}
\\

\hline

  \textbf{H-ELM} \cite{tang2015extreme} & [$11,1,14$] 	 &  &  \\

\textbf{GEIB} \cite{10906533} & [$13,0,13$] & [$15,1,10$]	&  &  &  &  &  \\

\textbf{F-BLS} \cite{sajid2024intuitionistic} & [$11,0,15$] & [$13,1,12$] & [$11,0,15$]	 &  &  &  \\

\textbf{IF-BLS} \cite{sajid2024intuitionistic} & [$11,3,12$] & [$16,0,10$] & [$14,1,11$] & [$15,1,10$]   \\

 \textbf{KRP-BLS} \cite{10902561} & [$10,2,14$] & \textbf{[$18,0,8$]} & [$12,1,13$] & \textbf{[$18,0,8$]} & [$10,2,14$]

\\

{\textbf{ECA-BLS}$^{\dagger}$} & [$17,1,8$] & \textbf{[$20,1,5$]} & [$17,2,7$] & \textbf{[$20,0,6$]} & [$14,2,10$] & [$14,2,10$]  	 \\

\hline

 \multicolumn{7}{l}{$^{\dagger}$ represents the proposed models.}
\end{tabular}}
\end{table*}

\renewcommand{\thefigure}{S.2}
\begin{figure*}[htbp]
\begin{minipage}{.240\linewidth}
\centering
\subfloat[\label{2d1}]{\includegraphics[scale=0.25]{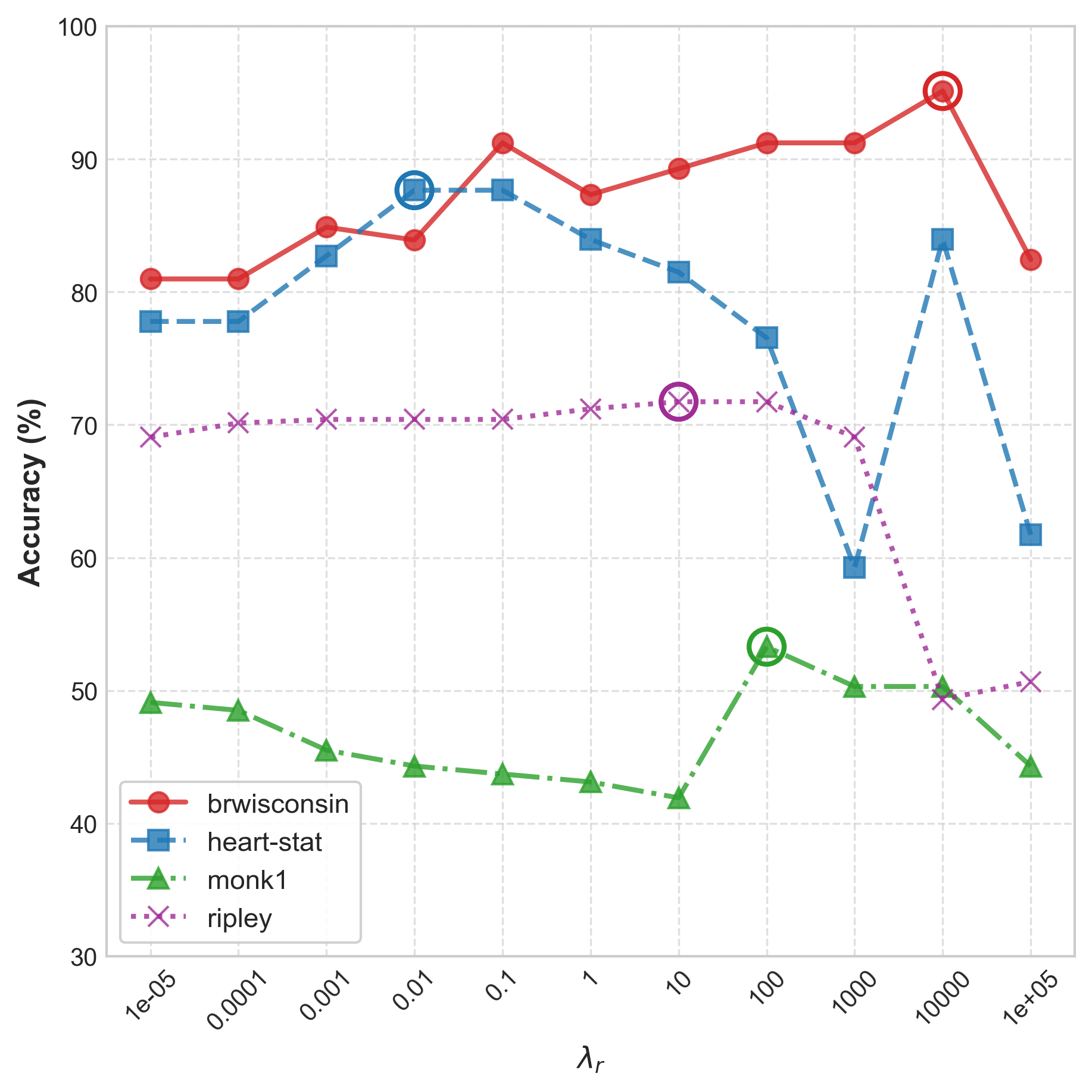}}
\end{minipage}
\begin{minipage}{.240\linewidth}
\centering
\subfloat[\label{2d2}]{\includegraphics[scale=0.25]{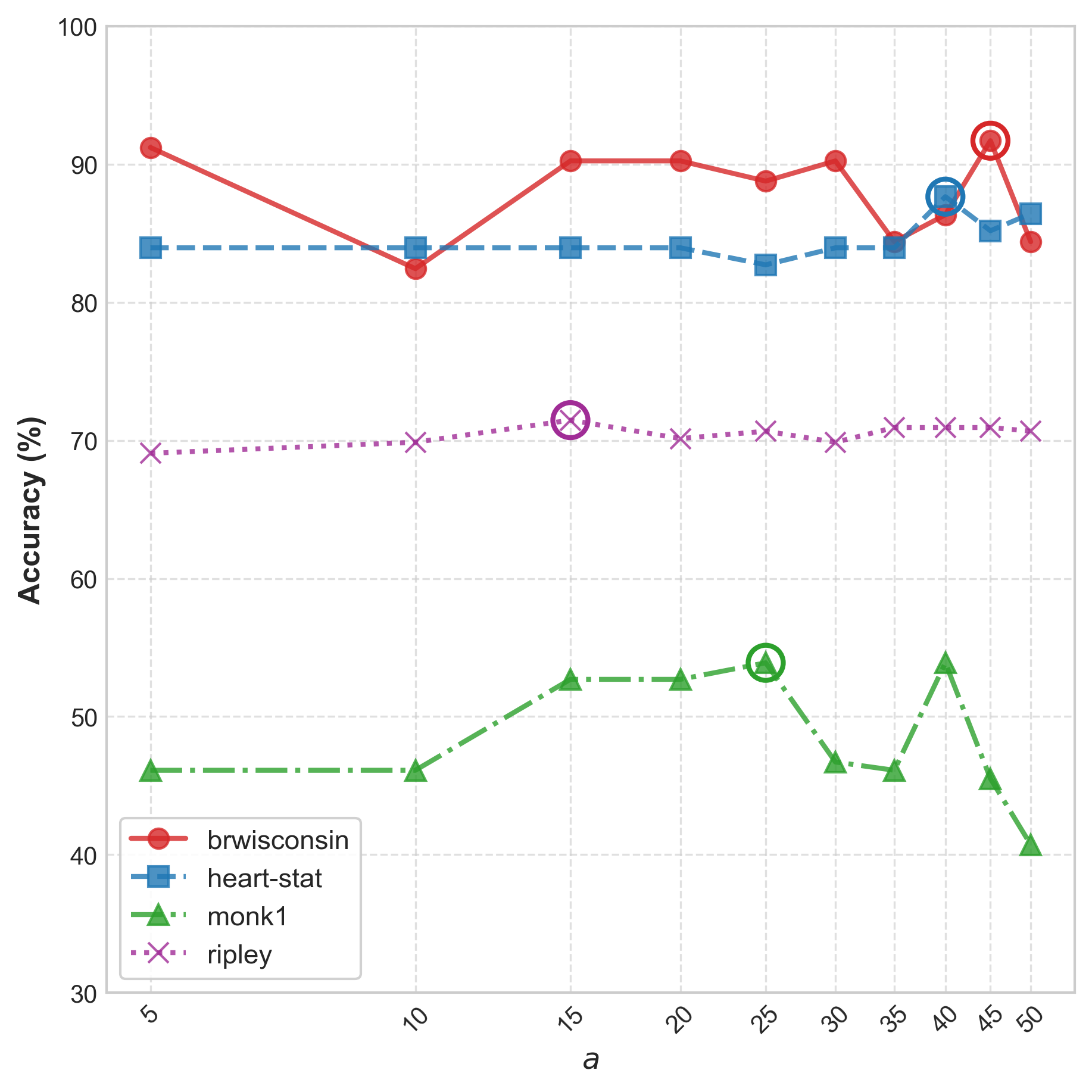}}
\end{minipage}
\begin{minipage}{.240\linewidth}
\centering
\subfloat[\label{2d3}]{\includegraphics[scale=0.25]{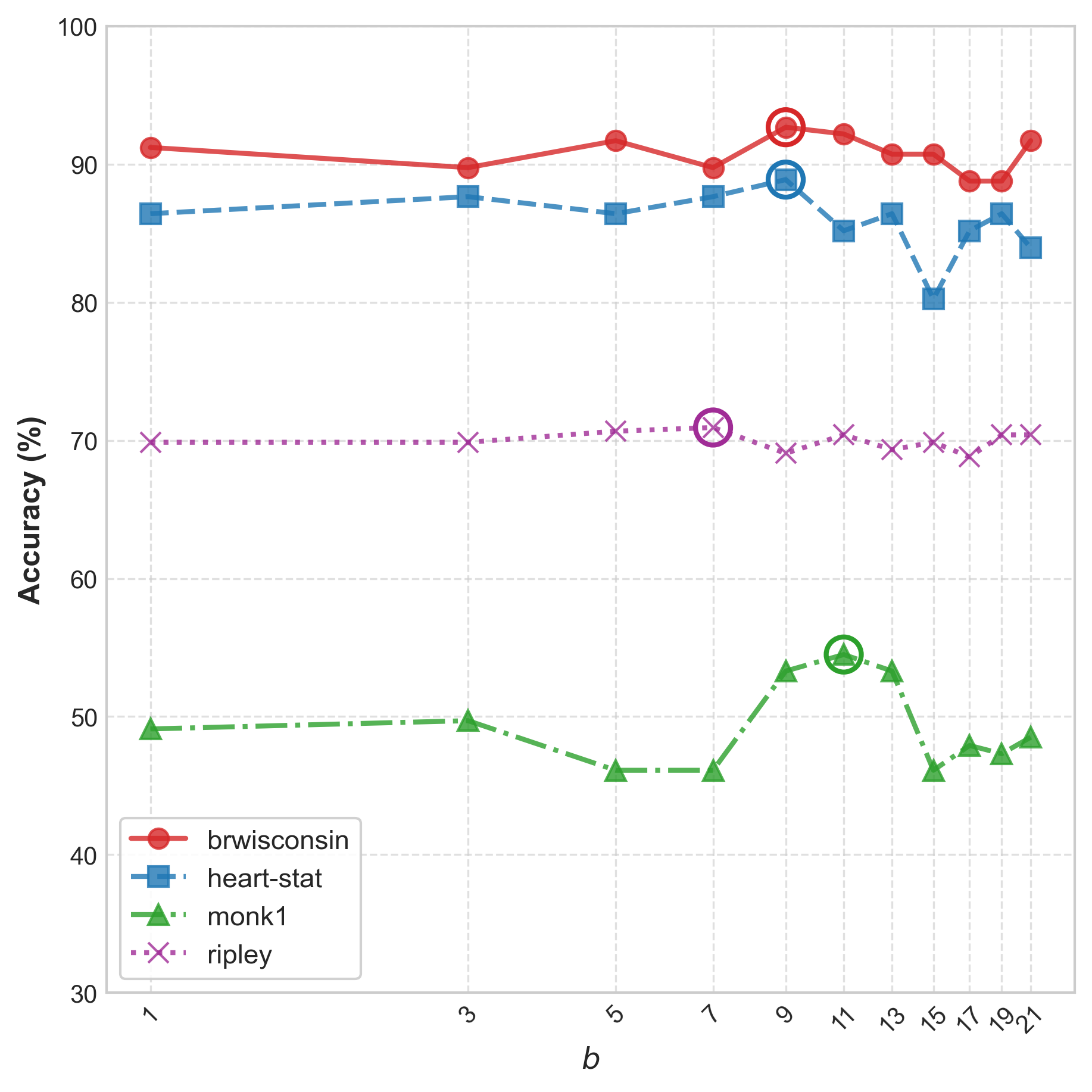}}
\end{minipage}
\begin{minipage}{.240\linewidth}
\centering
\subfloat[\label{2d4}]{\includegraphics[scale=0.25]{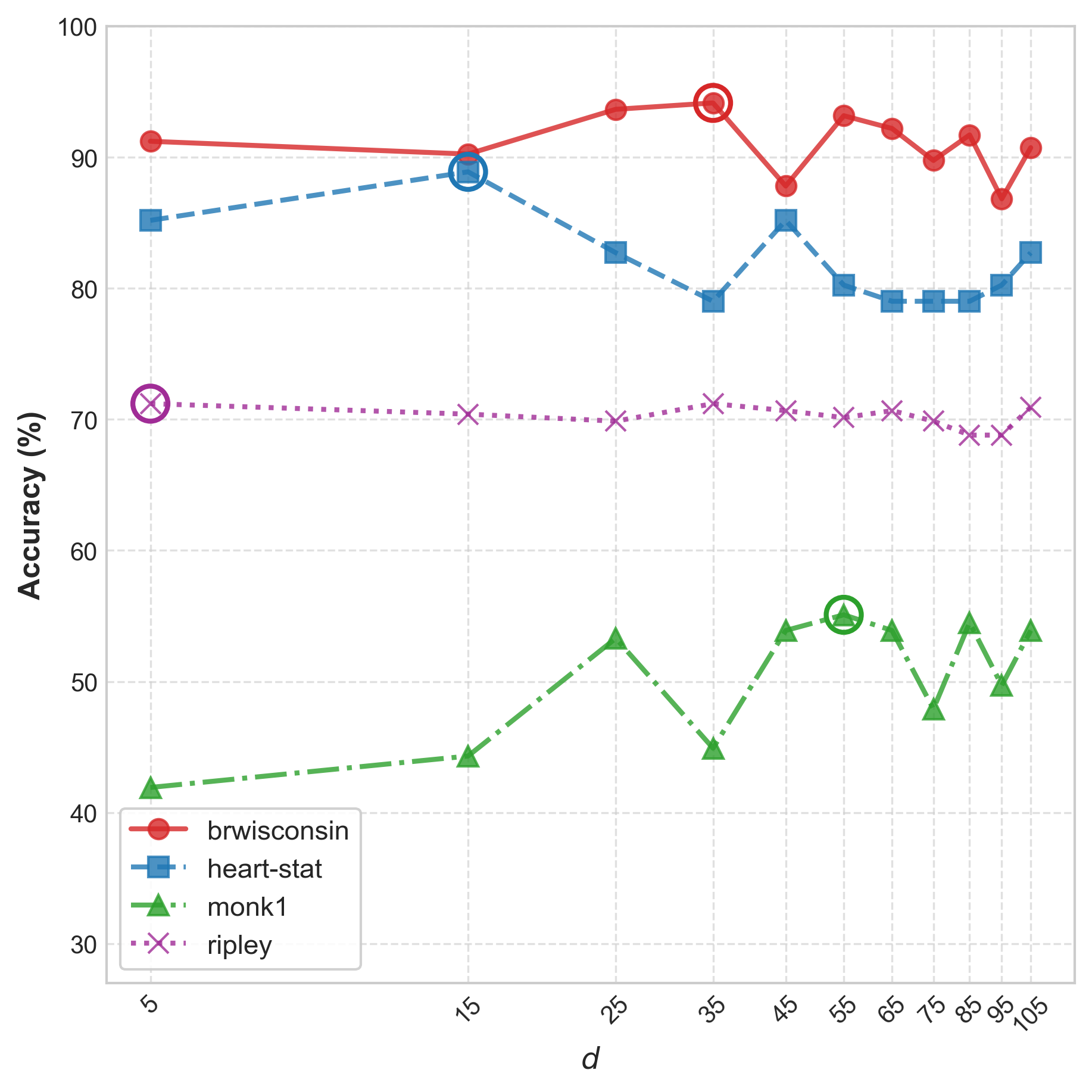}}
\end{minipage}
\par\medskip
\caption{Performance variation of the ECA-BLS model with respect to parameters $\lambda_r$, $a$, $b$ and $d$, respectively.}
\label{2_d_sensi}
\end{figure*}

\renewcommand{\thefigure}{S.3}
\begin{figure*}[htbp]
\begin{minipage}{.240\linewidth}
\centering
\subfloat[brwisconsin\label{3d1}]{\includegraphics[scale=0.35]{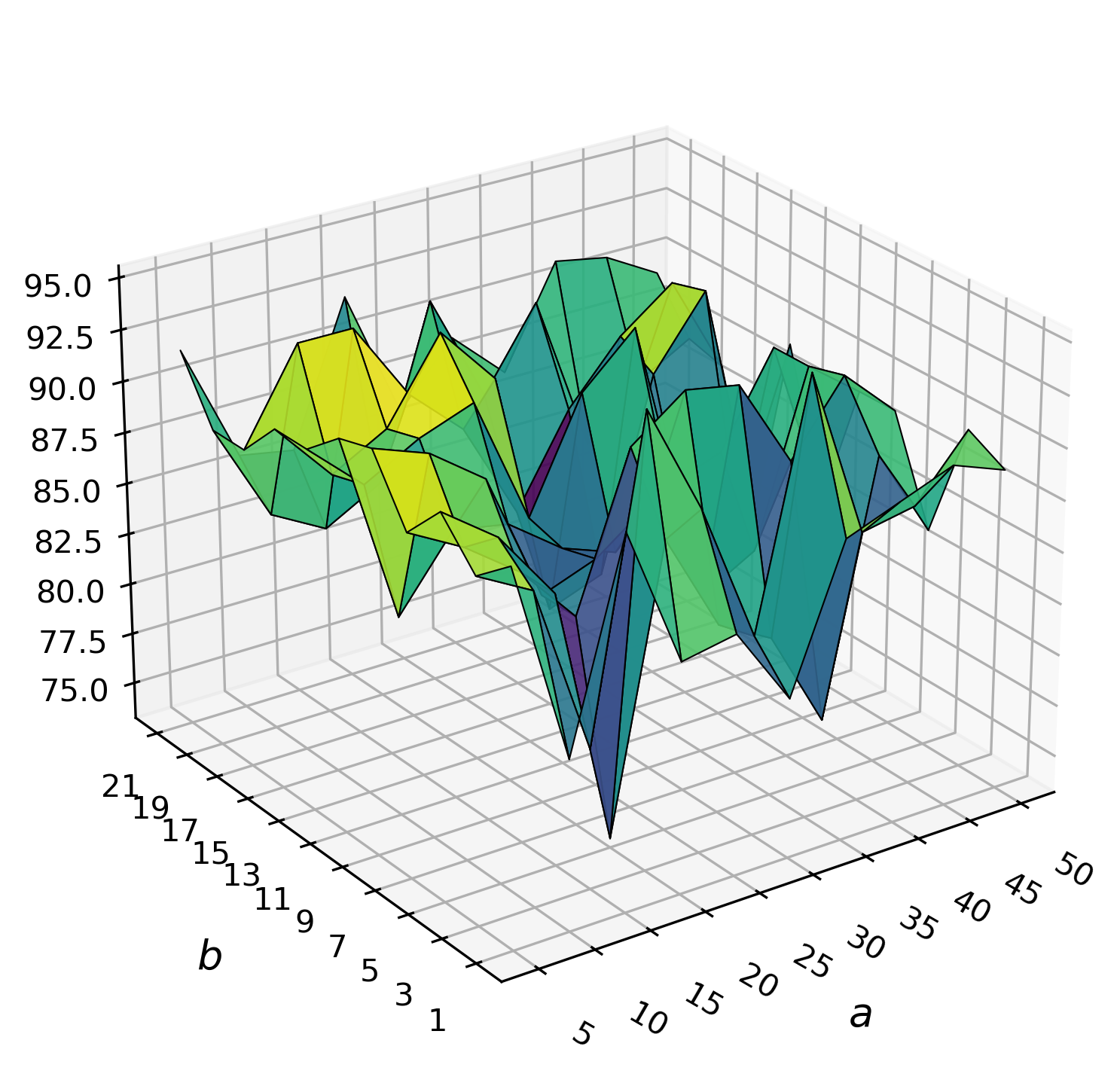}}
\end{minipage}
\begin{minipage}{.240\linewidth}
\centering
\subfloat[heart-stat\label{3d2}]{\includegraphics[scale=0.35]{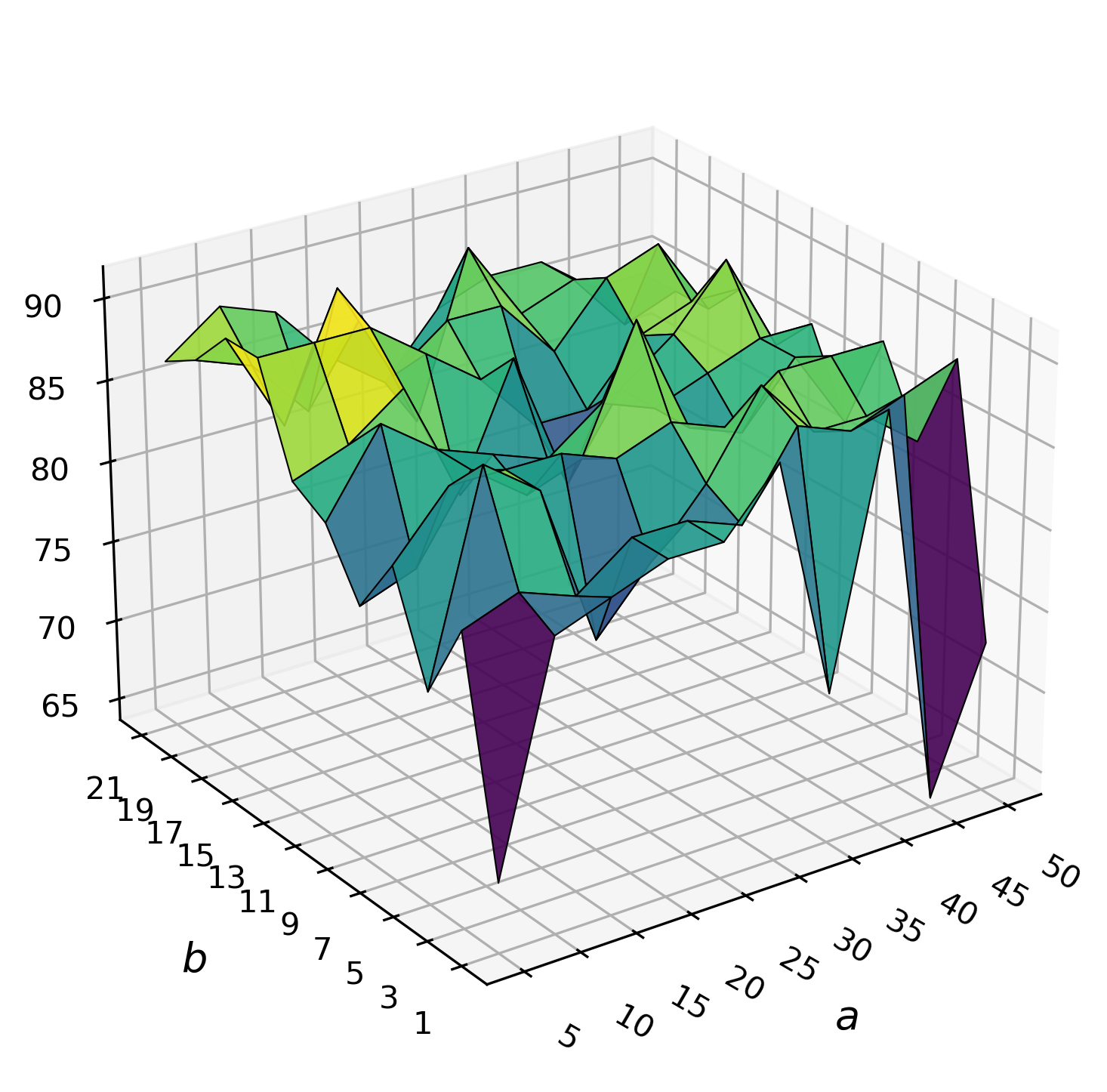}}
\end{minipage}
\begin{minipage}{.240\linewidth}
\centering
\subfloat[monk1\label{3d3}]{\includegraphics[scale=0.35]{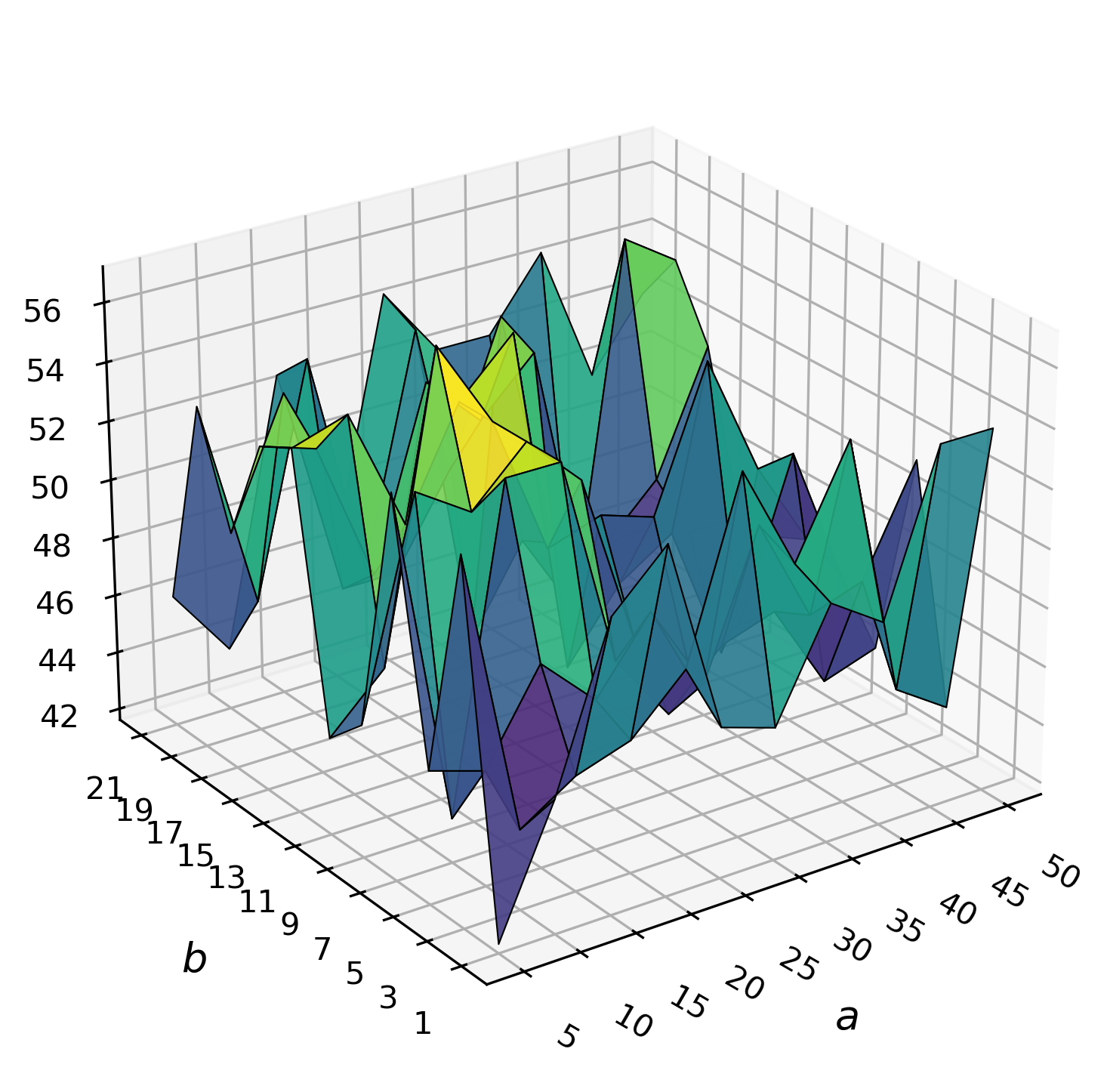}}
\end{minipage}
\begin{minipage}{.240\linewidth}
\centering
\subfloat[ripley\label{3d4}]{\includegraphics[scale=0.35]{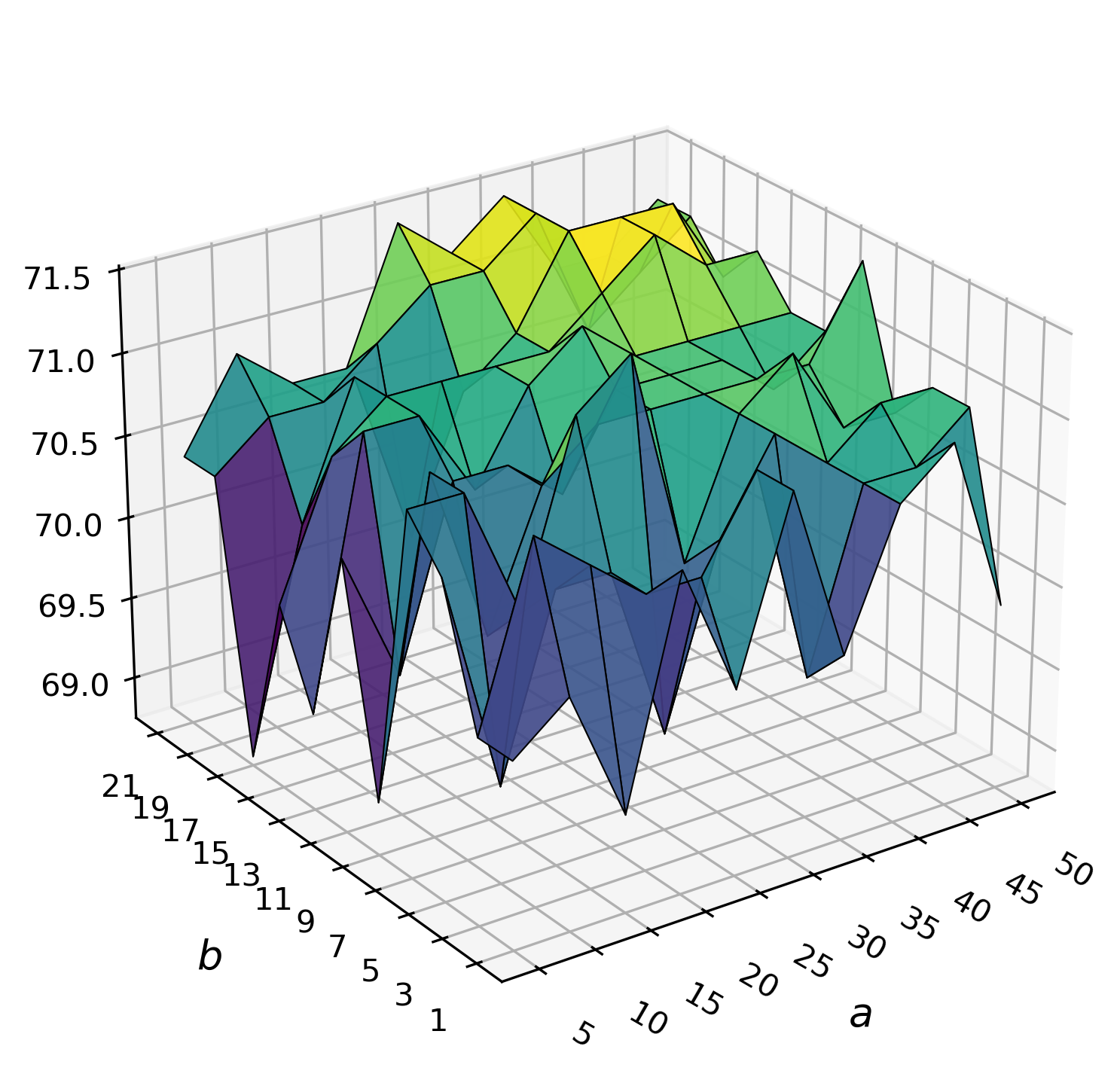}}
\end{minipage}
\par\medskip
\caption{Performance variation of the ECA-BLS model with respect to parameters $a$ and $b$ simultaneously.}
\label{3_d_sensi_a_vs_b}
\end{figure*}

\textbf{Compared Models:} To rigorously assess the effectiveness of the proposed models, we perform a comprehensive experimental comparison against a diverse set of representative random deep and broad learning (RdNN)-based architectures. The selected benchmark methods are systematically categorized to reflect different design philosophies and robustness mechanisms, including the classical Broad Learning System (BLS) \cite{chen2017broad}, Hierarchical Extreme Learning Machine (H-ELM) \cite{tang2015extreme}, fuzzy Broad Learning System (F-BLS) \cite{sajid2024intuitionistic}, intuitionistic fuzzy Broad Learning System (IF-BLS) \cite{sajid2024intuitionistic}, graph-embedding intuitionistic BLS (GEIB) \cite{10906533}, and kernel risk-sensitive mean $p$-power BLS (KRP-BLS) \cite{10902561}.

\textbf{Datasets:} To evaluate the performance of the proposed ECA-BLS model, extensive experiments are conducted on 30 publicly available binary-class datasets obtained from the UCI Machine Learning Repository \cite{dua2017uci} and the KEEL repository \cite{derrac2015keel}. These datasets originate from diverse application domains and exhibit substantial variations in sample size, feature dimensionality, and data complexity. Such diversity enables a systematic and comprehensive assessment of the robustness and generalization capability of ECA-BLS across different data distributions and problem scales.


\textbf{Performance Metrics}
To rigorously evaluate the effectiveness of the proposed PW-RVFL-CIL framework, its performance is assessed using several widely adopted classification metrics, namely Accuracy, Sensitivity, Specificity, Precision, F-measure, and G-mean. These metrics collectively provide a comprehensive and balanced evaluation of predictive performance, particularly in the presence of class imbalance and label noise.

The mathematical definitions of the employed performance measures are given as follows:
\begin{equation}
\text{Accuracy} =
\frac{\text{True}_{+} + \text{True}_{-}}
{\text{True}_{+} + \text{False}_{+} + \text{True}_{-} + \text{False}_{-}},
\end{equation}

\begin{equation}
\text{Sensitivity} =
\frac{\text{True}_{+}}
{\text{True}_{+} + \text{False}_{-}},
\end{equation}

\begin{equation}
\text{Specificity} =
\frac{\text{True}_{-}}
{\text{True}_{-} + \text{False}_{+}},
\end{equation}

\begin{equation}
\text{Precision} =
\frac{\text{True}_{+}}
{\text{True}_{+} + \text{False}_{+}},
\end{equation}

\begin{equation}
\text{F-measure} =
\frac{2 \times \text{Precision} \times \text{Sensitivity}}
{\text{Precision} + \text{Sensitivity}},
\end{equation}

\begin{equation}
\text{G-mean} =
\sqrt{\text{Sensitivity} \times \text{Specificity}}.
\end{equation}

These metrics characterize complementary aspects of classification performance. Here, $\text{True}_{+}$ and $\text{True}_{-}$ denote the number of correctly classified positive and negative samples, respectively, while $\text{False}_{+}$ and $\text{False}_{-}$ represent false positive and false negative outcomes. The inclusion of F-measure and G-mean ensures a fair and informative evaluation by jointly considering both class-wise performance and balance between sensitivity and specificity, which is particularly important for imbalanced learning scenarios.

\renewcommand{\thefigure}{S.4}
\begin{figure*}[htbp]
\begin{minipage}{.240\linewidth}
\centering
\subfloat[brwisconsin\label{3d11}]{\includegraphics[scale=0.35]{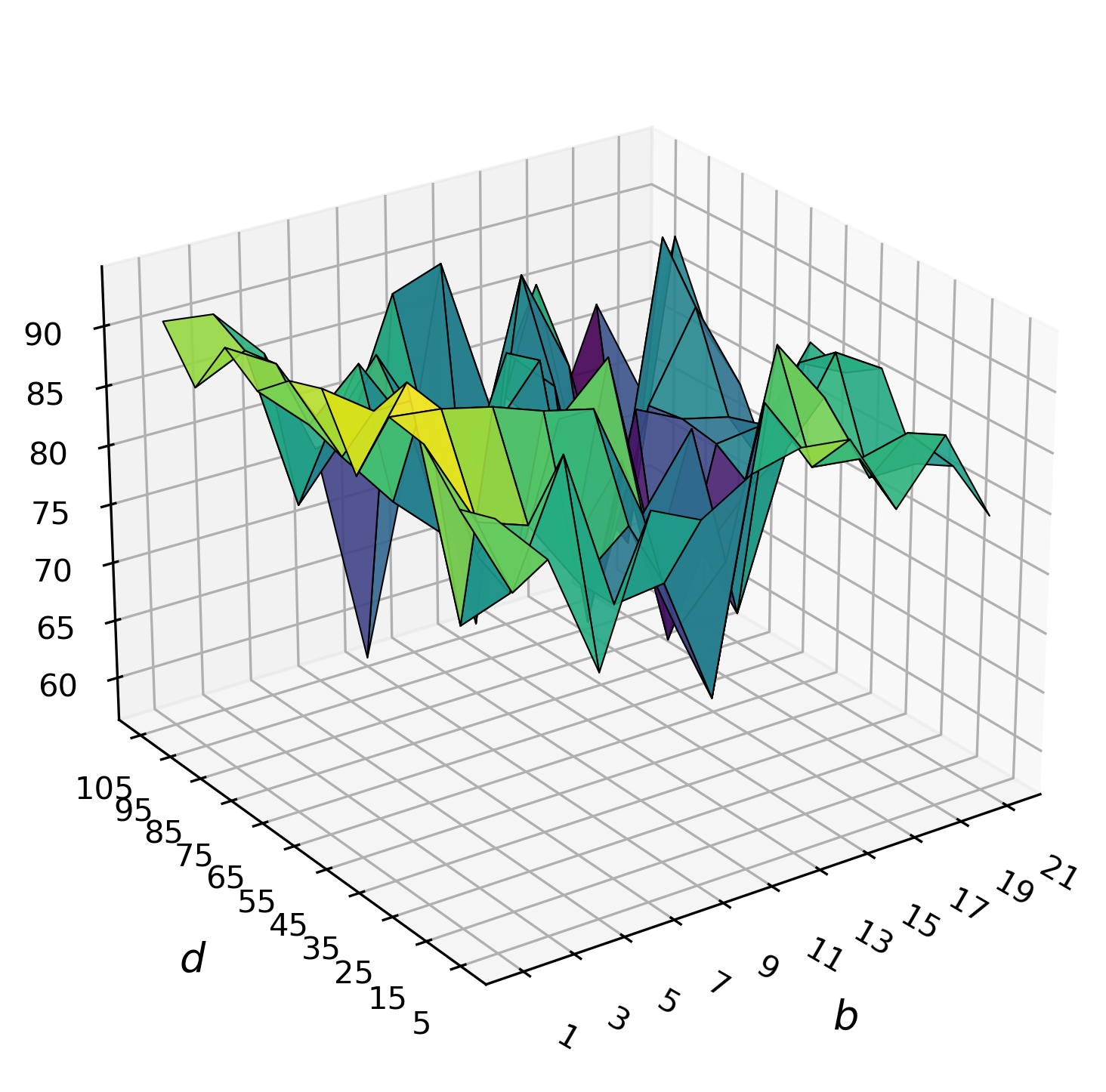}}
\end{minipage}
\begin{minipage}{.240\linewidth}
\centering
\subfloat[heart-stat\label{3d22}]{\includegraphics[scale=0.35]{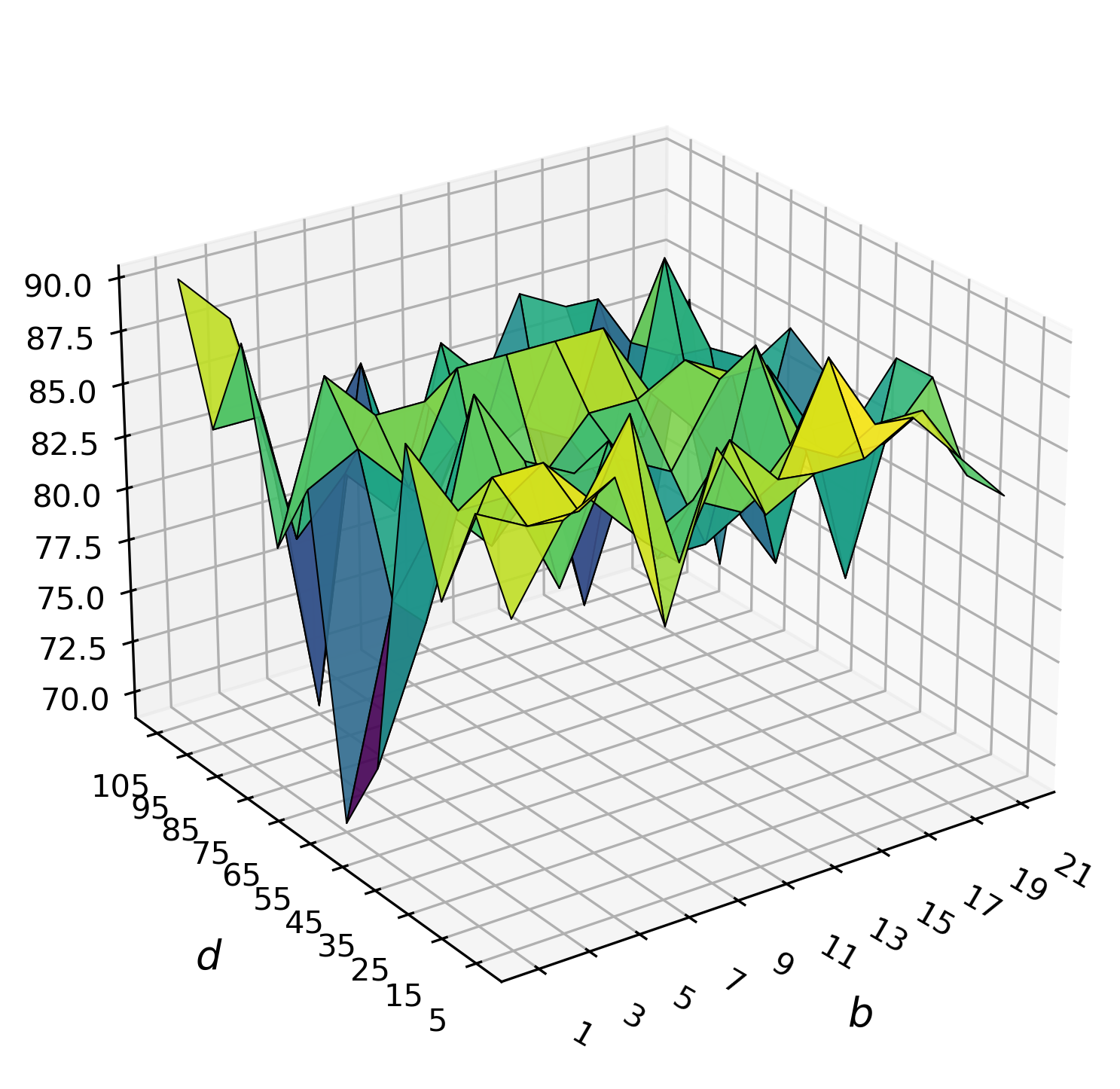}}
\end{minipage}
\begin{minipage}{.240\linewidth}
\centering
\subfloat[monk1\label{3d33}]{\includegraphics[scale=0.35]{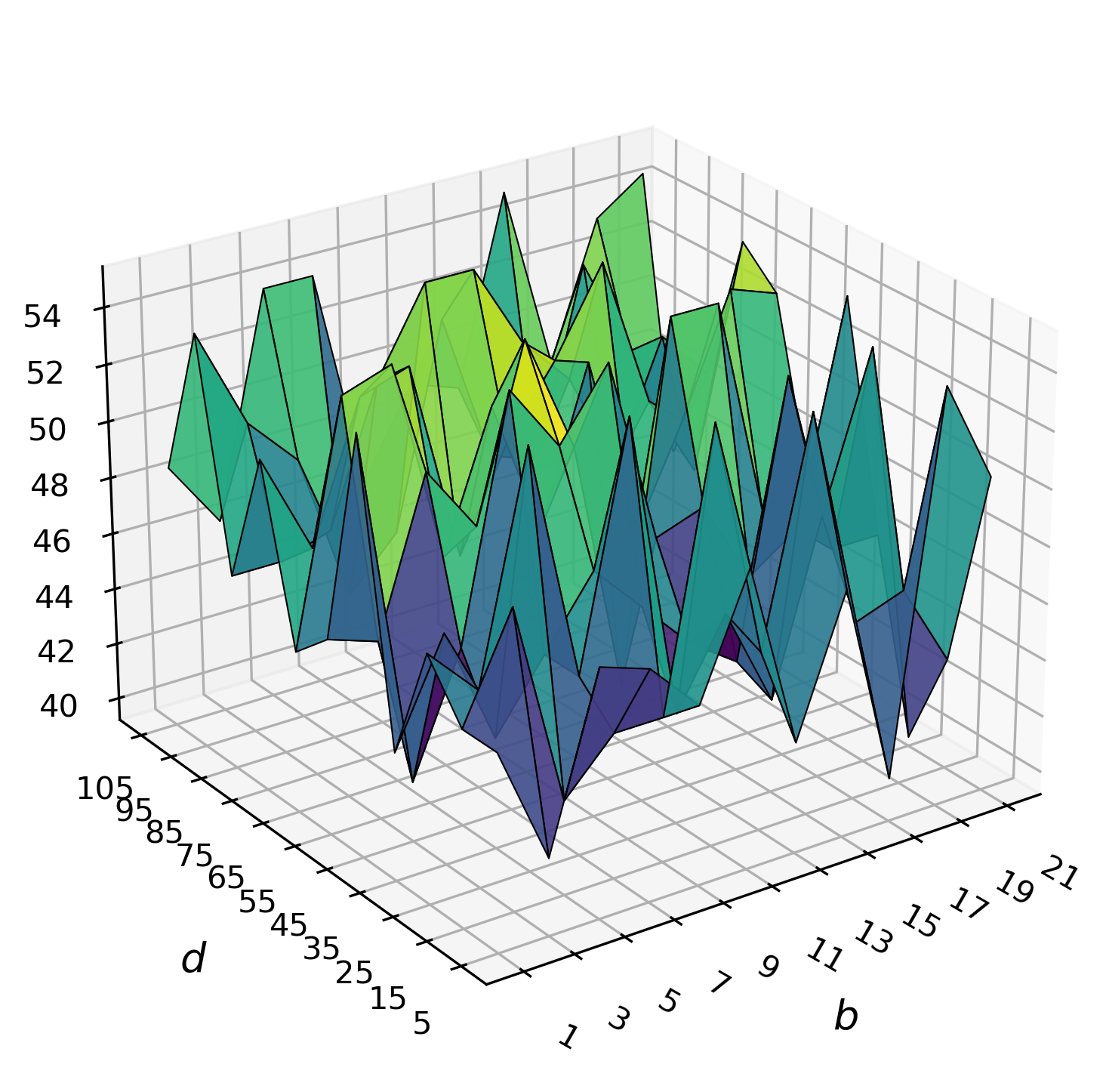}}
\end{minipage}
\begin{minipage}{.240\linewidth}
\centering
\subfloat[ripley\label{3d44}]{\includegraphics[scale=0.35]{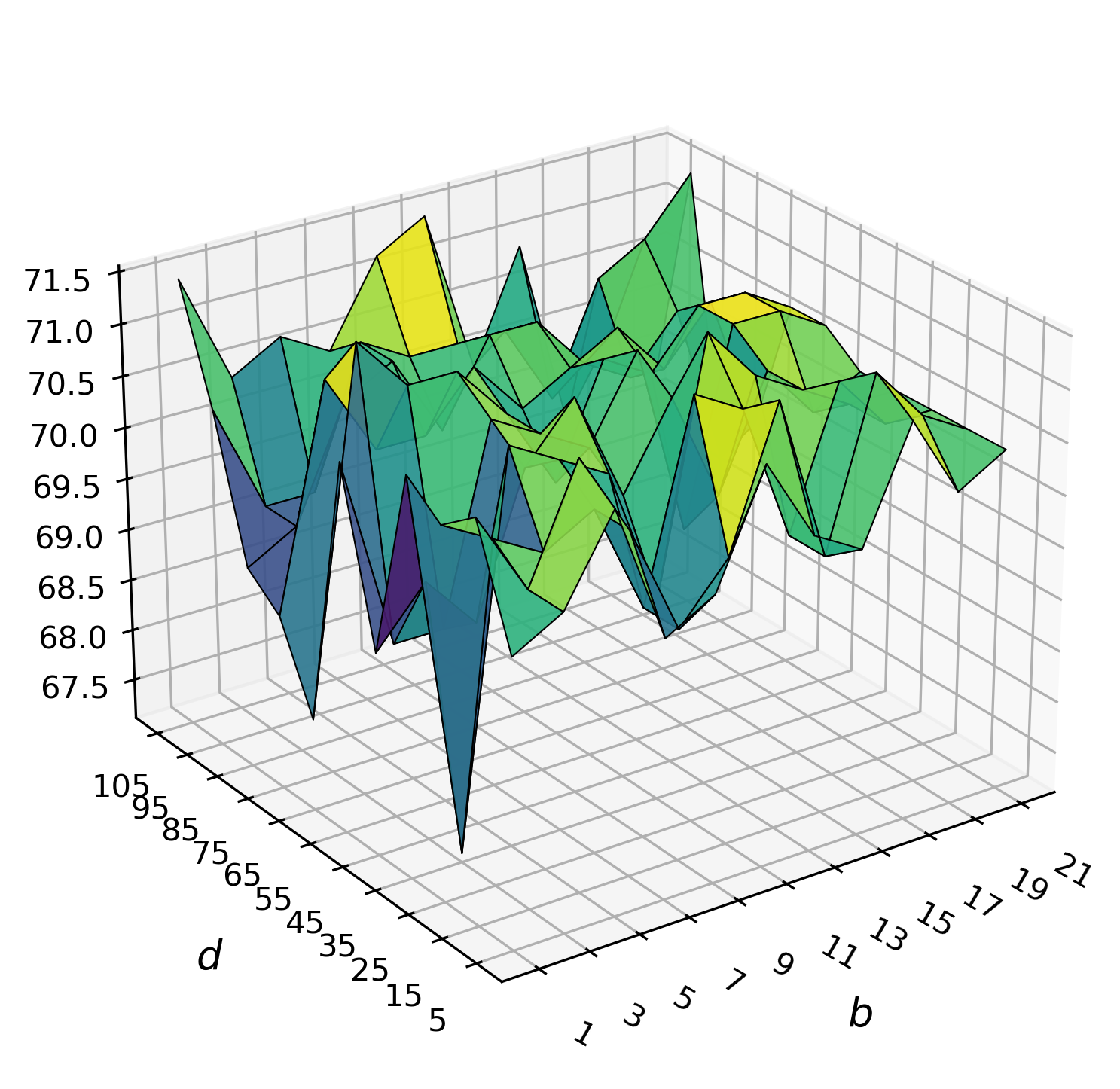}}
\end{minipage}
\par\medskip
\caption{Performance variation of the ECA-BLS model with respect to parameters $b$ and $d$ simultaneously.}
\label{3_d_sensi_b_vs_d}
\end{figure*}

\renewcommand{\thefigure}{S.5}
\begin{figure*}[htbp]
\begin{minipage}{.240\linewidth}
\centering
\subfloat[brwisconsin\label{3d111}]{\includegraphics[scale=0.35]{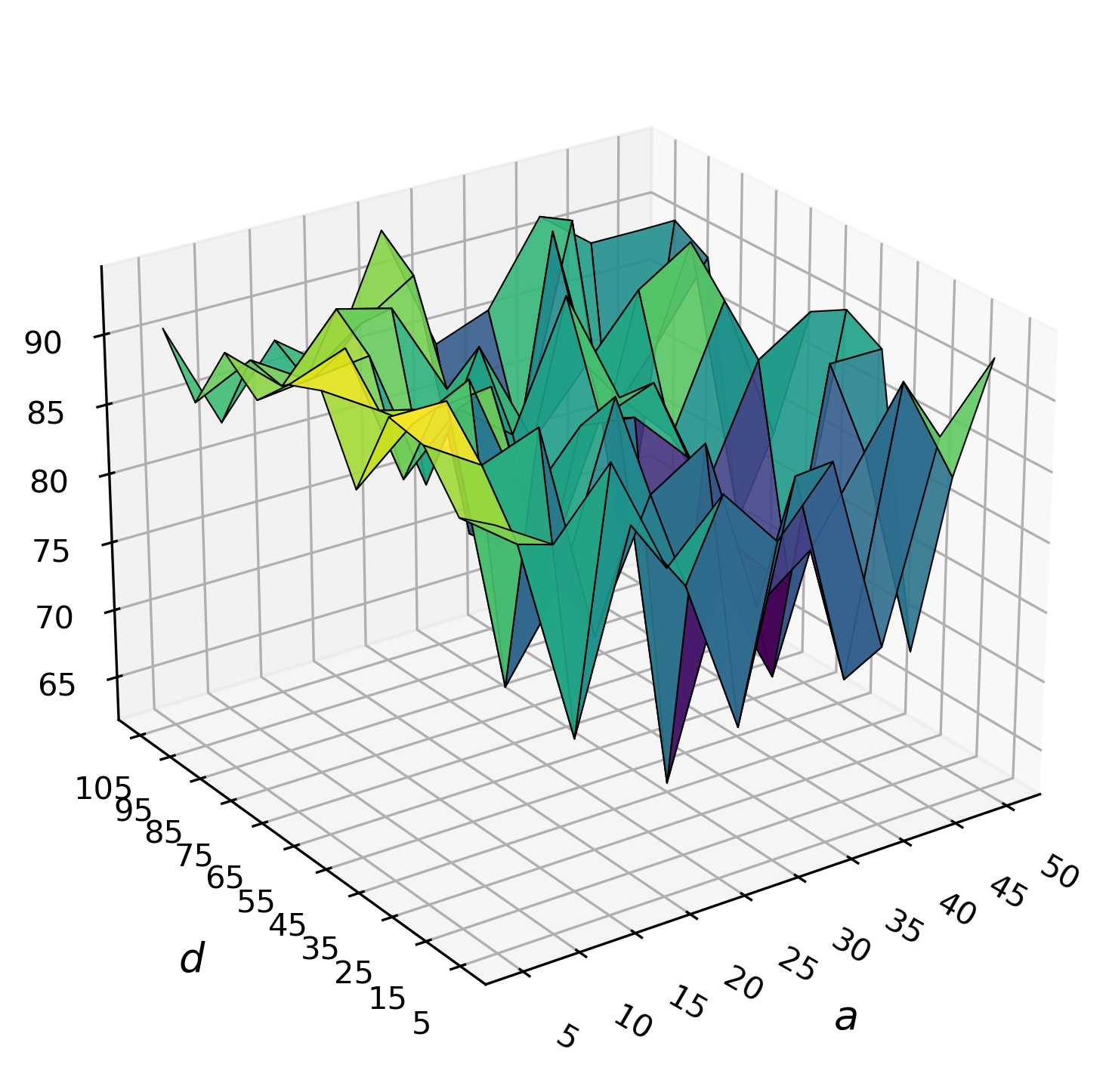}}
\end{minipage}
\begin{minipage}{.240\linewidth}
\centering
\subfloat[heart-stat\label{3d222}]{\includegraphics[scale=0.35]{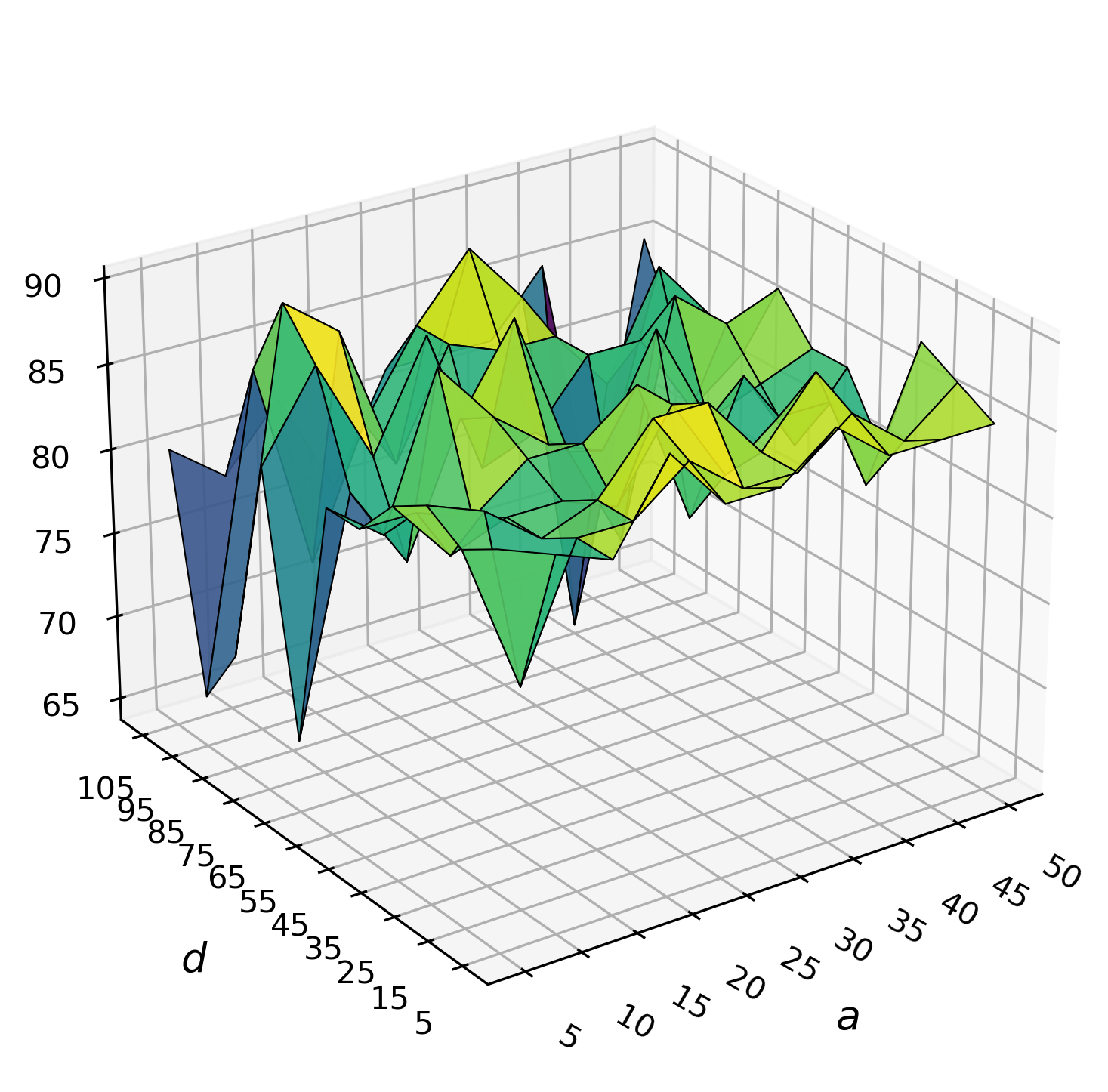}}
\end{minipage}
\begin{minipage}{.240\linewidth}
\centering
\subfloat[monk1\label{3d333}]{\includegraphics[scale=0.35]{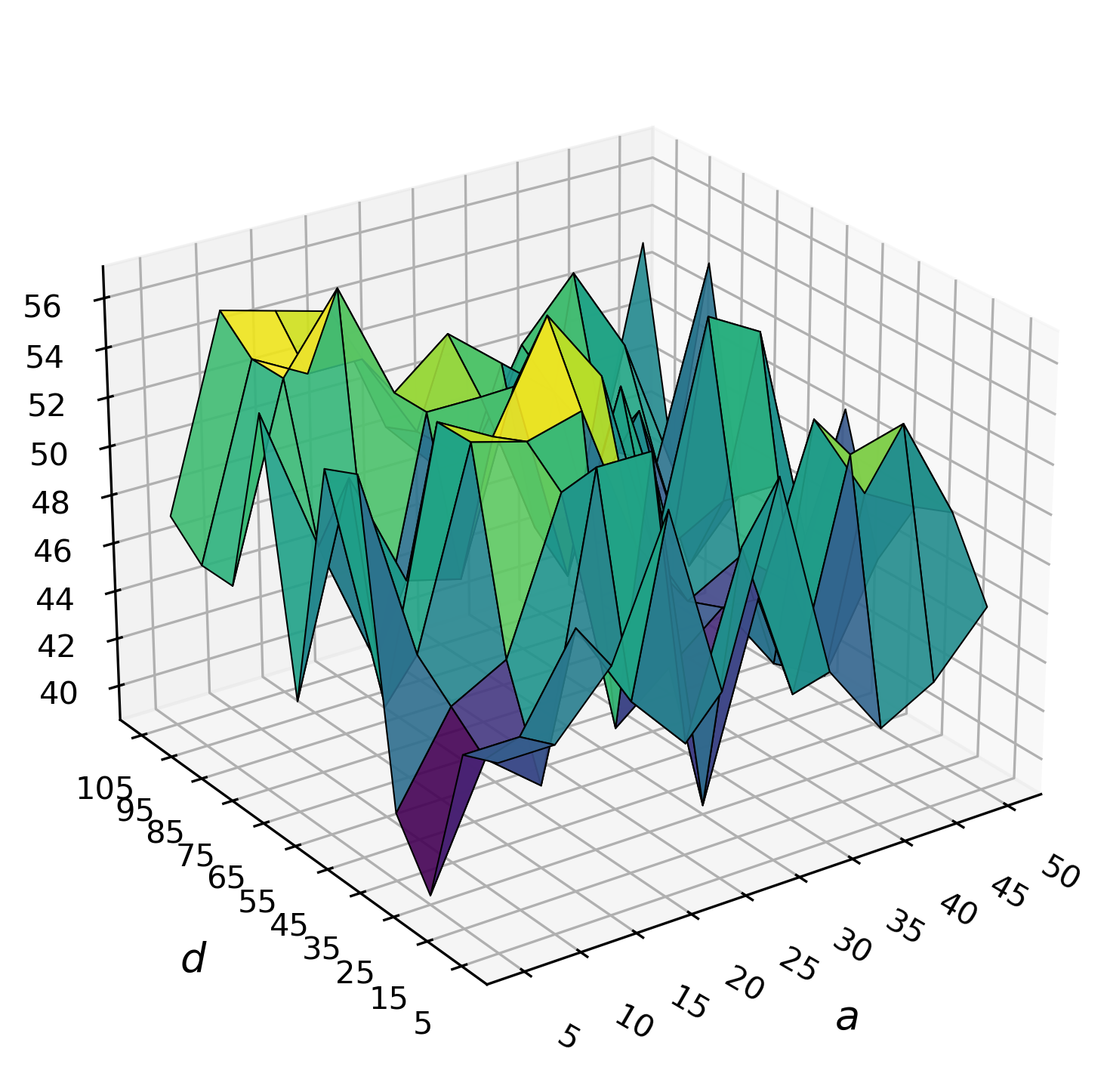}}
\end{minipage}
\begin{minipage}{.240\linewidth}
\centering
\subfloat[ripley\label{3d444}]{\includegraphics[scale=0.35]{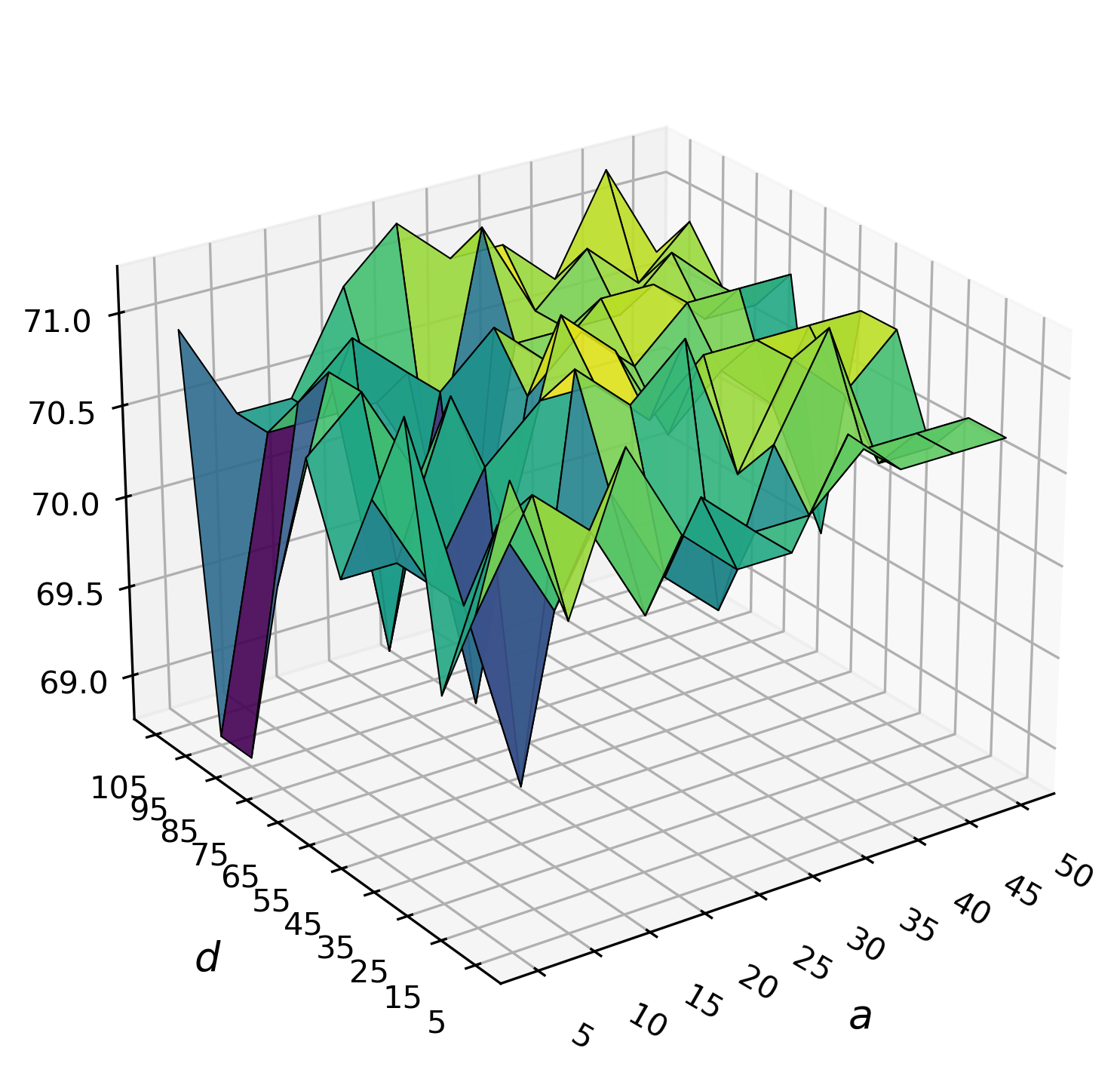}}
\end{minipage}
\par\medskip
\caption{Performance variation of the ECA-BLS model with respect to parameters $a$ and $d$ simultaneously.}
\label{3_d_sensi_a_vs_d}
\end{figure*}


\noindent
\section*{S.VII. Win--Tie--Loss (W--T--L) sign test}  
To further substantiate the statistical significance of the comparative results, we employ the Win--Tie--Loss (W--T--L) sign test \cite{demvsar2006statistical}. Table~\ref{win tie loss sign test for linear} summarizes the pairwise comparisons between the proposed ECA-BLS and the baseline models on the UCI and KEEL datasets. Each entry in the form $[x, y, z]$ indicates that the model listed in the row achieves $x$ wins, $y$ ties, and $z$ losses against the model listed in the corresponding column.

Under the null hypothesis, two competing models are assumed to perform equivalently, implying that each model should win approximately $v/2$ out of $v$ datasets. A statistically significant difference is established if a model achieves at least $v/2 + 1.96\sqrt{v/2}$ wins. In the presence of ties, they are evenly divided between the two models; if the number of ties is odd, one tie is discarded prior to division. For $v=26$ datasets, the resulting significance threshold is 18 wins.

Based on this criterion, the proposed ECA-BLS achieves at least 18 wins against several strong baselines, including F-BLS, GEIB, and H-ELM, thereby demonstrating statistically significant superiority. Although the threshold is not reached against a few baselines, the overall W--T--L outcomes in Table~\ref{win tie loss sign test for linear} consistently favor ECA-BLS. Collectively, these results provide strong evidence that the proposed ECA-BLS model delivers superior and more reliable performance compared to the baseline methods across diverse benchmark datasets.

\section*{S.VIII. Sensitivity Analysis}
\label{sensi}
In this section, we investigate the sensitivity of the proposed ECA-BLS model to its key hyperparameters, namely the regularization parameter $\lambda_r$ and the structural parameters $a$, $b$, and $d$, using four representative datasets: \textit{brwisconsin}, \textit{heart-stat}, \textit{monk1}, and \textit{ripley}.

\subsubsection{Effect of the regularization parameter \texorpdfstring{$\lambda_r$}{lamb}}
Figure~\ref{2d1} illustrates the influence of $\lambda_r$ on classification accuracy. The results indicate that the performance stabilizes when $\lambda_r \geq 0.01$ for all datasets, while excessively large values ($\lambda_r > 1000$) lead to a degradation in accuracy. Accordingly, $\lambda_r$ is selected from the range $[0.01, 1000]$ in subsequent experiments.

\subsubsection{Effect of the number of feature groups \texorpdfstring{$a$}{a}}
Figure~\ref{2d2} shows the effect of the parameter $a$ on model performance. It can be observed that accuracy improves as $a$ increases and reaches its optimum for $a \geq 15$, after which performance gradually declines beyond $a = 45$. Thus, an appropriate range for $a$ is $[15, 45]$.

\subsubsection{Effect of the number of nodes per feature group \texorpdfstring{$b$}{b}}
The impact of parameter $b$ is presented in Fig.~\ref{2d3}. The results reveal that optimal performance is consistently achieved when $b$ lies in the range $[7, 11]$, which is therefore adopted in this study.

\subsubsection{Effect of the number of enhancement nodes \texorpdfstring{$d$}{d}}
Figure~\ref{2d4} depicts the sensitivity of ECA-BLS to the parameter $d$. Unlike the previous parameters, the optimal value of $d$ varies across datasets, indicating that this parameter is inherently dataset dependent.



\renewcommand{\thetable}{S.II}
\begin{table*}[htbp]
\centering
\caption{Best hyperparameters and performance of the proposed model for the experiments on 26 UCI \cite{dua2017uci} and KEEL \cite{derrac2015keel} datasets.}
\label{tab:best_hyperparams}
\renewcommand{\arraystretch}{1}
\resizebox{\textwidth}{!}{%
\begin{tabular}{|l|c|}
\hline
\textbf{{Dataset}} & \textbf{(Sensitivity, Specificity, Precision, Recall, F\_measure, Gmean, lamda\_r, a, b, d)} \\
\hline
acute\_inflammation & (100, 100, 100, 100, 100, 100, 0.1, 30, 21, 45) \\
bank & (0, 100, 0, 0, 0, 0, 100, 10, 1, 25) \\
brwisconsin & (99.21259843, 69.23076923, 84, 99.21259843, 90.97472924, 82.87680319, 10, 5, 1, 5) \\
chess\_krvkp & (90.6374502, 97.37417943, 97.43040685, 90.6374502, 93.91124871, 93.94544874, 0.1, 20, 19, 75) \\
cleve & (80.48780488, 89.79591837, 86.84210526, 80.48780488, 83.5443038, 85.01456555, 1, 10, 19, 5) \\
ecoli-0-1\_vs\_5 & (0, 100, 0, 0, 0, 0, 100, 5, 1, 65) \\
ecoli-0-1-4-7\_vs\_2-3-5-6 & (15.38461538, 98.86363636, 66.66666667, 15.38461538, 25, 38.99973104, 10000, 5, 1, 85) \\
ecoli-0-1-4-7\_vs\_5-6 & (28.57142857, 94.62365591, 28.57142857, 28.57142857, 28.57142857, 51.99550967, 100, 5, 1, 75) \\
ecoli-0-6-7\_vs\_5 & (0, 100, 0, 0, 0, 0, 1000, 5, 1, 95) \\
haber & (43.75, 86.84210526, 41.17647059, 43.75, 42.42424242, 61.63880357, 0.0001, 25, 1, 45) \\
heart\_hungarian & (77.41935484, 82.75862069, 70.58823529, 77.41935484, 73.84615385, 80.04448152, 1, 25, 9, 45) \\
heart-stat & (84.7826087, 80, 84.7826087, 84.7826087, 84.7826087, 82.35659473, 1, 30, 19, 15) \\
ionosphere & (83.75, 80.76923077, 93.05555556, 83.75, 88.15789474, 82.24611284, 0.1, 15, 1, 85) \\
led7digit-0-2-4-5-6-7-8-9\_vs\_1 & (88.88888889, 91.12903226, 42.10526316, 88.88888889, 57.14285714, 90.00199122, 1, 30, 1, 65) \\
mammographic & (78.67647059, 86.92810458, 84.2519685, 78.67647059, 81.36882129, 82.69943448, 1, 20, 7, 15) \\
monk1 & (6.493506494, 95.55555556, 55.55555556, 6.493506494, 11.62790698, 24.90964914, 100000, 30, 11, 45) \\
monk3 & (53.65853659, 29.41176471, 42.30769231, 53.65853659, 47.31182796, 39.72646791, 10000, 35, 11, 35) \\
oocytes\_merluccius\_nucleus\_4d & (88.94472362, 45.37037037, 75, 88.94472362, 81.37931034, 63.52523163, 0.1, 25, 3, 85) \\
ozone & (0, 100, 0, 0, 0, 0, 10000, 20, 1, 95) \\
ripley & (63.68421053, 76.75675676, 73.7804878, 63.68421053, 68.36158192, 69.91561669, 0.00001, 5, 5, 25) \\
shuttle-6\_vs\_2-3 & (66.66666667, 98.48484848, 66.66666667, 66.66666667, 66.66666667, 81.02873913, 10, 50, 15, 65) \\
spambase & (66.41074856, 98.02325581, 95.31680441, 66.41074856, 78.28054299, 80.68331795, 1, 35, 21, 55) \\
spectf & (92.30769231, 43.75, 86.95652174, 92.30769231, 89.55223881, 63.54889093, 100, 5, 3, 105) \\
vertebral\_column\_2clases & (10.71428571, 95.38461538, 50, 10.71428571, 17.64705882, 31.96839098, 10, 45, 7, 55) \\
wpbc & (7.692307692, 97.82608696, 50, 7.692307692, 13.33333333, 27.43188585, 10, 5, 1, 25) \\
yeast-2\_vs\_4 & (0, 100, 0, 0, 0, 0, 1, 5, 1, 25) \\  \hline
\end{tabular}
}
\end{table*}

\subsubsection{Joint effect of \texorpdfstring{$a$ and $b$}{a and b}}
Figure~\ref{3_d_sensi_a_vs_b} illustrates the combined influence of $a$ and $b$ on model performance. The results suggest that medium-to-high values of both parameters lead to improved accuracy. Consequently, $a$ and $b$ are selected from the ranges $[13, 21]$ and $[25, 50]$, respectively.

\subsubsection{Joint effect of \texorpdfstring{$b$ and $d$}{b and d}}
The joint sensitivity of parameters $b$ and $d$ is shown in Fig.~\ref{3_d_sensi_b_vs_d}. The optimal configuration varies across datasets, further confirming the dataset-dependent nature of these parameters.

\subsubsection{Joint effect of \texorpdfstring{$a$ and $d$}{a and d}}
Figure~\ref{3_d_sensi_a_vs_d} presents the combined effect of $a$ and $d$ on ECA-BLS performance. The observed optimal regions differ substantially among datasets, indicating that the interaction between these parameters is also dataset dependent.

Since performance varies across datasets and domains, systematic hyperparameter tuning is necessary to achieve reliable generalization.

\section*{S.IX. Evaluation of the Proposed ECA-BLS under Contaminated Gaussian Noise}

To rigorously examine the robustness of the proposed ECA-BLS framework, we conducted experiments on five representative datasets by injecting contaminated Gaussian noise into the training samples. This evaluation aims to simulate realistic noisy learning environments where data corruption and uncertainty are unavoidable.

As reported in Table~\ref{tab:333}, the proposed ECA-BLS consistently demonstrates superior performance compared to the conventional BLS across all considered noise levels and datasets. Notably, ECA-BLS achieves an overall average accuracy of 76.9522\%, significantly outperforming BLS, which attains 73.7706\%. This clear margin highlights the enhanced noise tolerance of the proposed model.

Furthermore, for each individual dataset and at every noise ratio (ranging from 5\% to 40\%), ECA-BLS either maintains stable performance or degrades gracefully, whereas BLS exhibits noticeable accuracy deterioration as noise intensity increases. This consistent advantage verifies that the adaptive error-aware correction mechanism embedded in ECA-BLS effectively mitigates the adverse effects of contaminated Gaussian noise.

These results strongly demonstrate that the proposed ECA-BLS model possesses superior robustness, stability, and generalization capability under noisy conditions. Such characteristics make ECA-BLS particularly well-suited for real-world applications where training data are often imperfect, noisy, or corrupted, thereby establishing its practical superiority over traditional BLS.

\renewcommand{\thetable}{S.III}
\begin{table}[htp]
\caption{Classification accuracy (\%) of ECA-BLS and BLS under different levels of contaminated Gaussian noise.}
\label{tab:333}
\begin{tabular}{cccc}
\hline
Dataset & Noise & ECA-BLS & BLS \cite{chen2017broad} \\
\hline
\multirow{6}{*}{ecoli-0-6-7\_vs\_5} & 5\% & 90.9091 & 90.9091 \\
 & 10\% & 90.9091 & 90.9091 \\
 & 15\% & 90.9091 & 90.9091 \\
 & 20\% & 90.9091 & 90.9091 \\
 & 30\% & 90.9091 & 89.3939 \\
 & 40\% & 90.9091 & 89.3939 \\
 \hline
\multicolumn{1}{l}{Average accuracy} & \multicolumn{1}{l}{} & \textbf{90.9091} & 90.404 \\
\hline
\multirow{6}{*}{ecoli-0-1-4-7\_vs\_5-6} & 5\% & 94 & 96 \\
 & 10\% & 93 & 92 \\
 & 15\% & 93 & 92 \\
 & 20\% & 93 & 85 \\
 & 30\% & 93 & 90 \\
 & 40\% & 93 & 90 \\
 \hline
\multicolumn{1}{l}{Average accuracy} & \multicolumn{1}{l}{} & \textbf{93.1667} & 90.8333 \\
\hline
\multirow{6}{*}{ecoli-0-1\_vs\_5} & 5\% & 88.8889 & 87.5 \\
 & 10\% & 87.5 & 86.1111 \\
 & 15\% & 88.8889 & 86.1111 \\
 & 20\% & 88.8889 & 88.8889 \\
 & 30\% & 88.8889 & 76.3889 \\
 & 40\% & 88.8889 & 95.8333 \\
 \hline
\multicolumn{1}{l}{} & \multicolumn{1}{l}{} & \textbf{88.6574} & 86.8056 \\
\hline
\multirow{6}{*}{monk1} & 5\% & 53.2934 & 46.1078 \\
 & 10\% & 47.3054 & 48.503 \\
 & 15\% & 46.1078 & 52.6946 \\
 & 20\% & 47.9042 & 44.9102 \\
 & 30\% & 50.2994 & 46.1078 \\
 & 40\% & 56.2874 & 46.1078 \\
 \hline
\multicolumn{1}{l}{Average accuracy} & \multicolumn{1}{l}{} & \textbf{50.1996} & 47.4052 \\
\hline
\multirow{6}{*}{vertebral\_column\_2clases} & 5\% & 48.3871 & 63.4409 \\
 & 10\% & 76.3441 & 60.2151 \\
 & 15\% & 34.4086 & 35.4839 \\
 & 20\% & 69.8925 & 60.2151 \\
 & 30\% & 72.043 & 50.5376 \\
 & 40\% & 69.8925 & 50.5376 \\
 \hline
\multicolumn{1}{l}{Average accuracy} & \multicolumn{1}{l}{} & \textbf{61.828} & 53.405 \\
\hline
\multicolumn{1}{l}{Overall average accuracy} & \multicolumn{1}{l}{} & \textbf{76.9522} & 73.7706\\
\hline
\end{tabular}
\end{table}

\end{document}